\let\proof\relax
\let\endproof\relax
\newtheorem{theorem}{Theorem}
\newtheorem{problem}{Problem}
\newtheorem{lemma}{Lemma}
\newtheorem{definition}{Definition}
\newtheorem{remark}{Remark}
\newcommand{\rv}[1]{{#1}}
\newcommand{\alg}{\textit{Kino-PAX}\xspace}
\newcommand{\itr}{50\xspace}
\newcommand{\maxT}{60\xspace}
\newcommand{\free}{\text{valid}}
\newcommand{\goal}{\text{goal}}
\newcommand{\init}{\text{init}}
\begin{document}

\title{Kino-PAX: Highly Parallel Kinodynamic Sampling-based Planner}

\author{Nicolas Perrault, Qi Heng Ho, and Morteza Lahijanian % <-this % stops a space
\thanks{Manuscript received: September 8, 2024; Revised: December 4, 2024; Accepted: January 2, 2025.}% <-this % stops a space
\thanks{This paper was recommended for publication by Editor Lucia Pallottino upon evaluation of the Associate Editor and Reviewers’ comments.}% <-this % stops a space
% \thanks{This work was supported by NASA STTR award 80NSSC20C0314.}
% \thanks{$^*$A. Theurkauf and J. Kottinger have equal contributions in this work.}
\thanks{Authors are with the department of Aerospace Engineering Sciences at the University of Colorado Boulder, CO, USA
        {\tt\small \{\textit{firstname}.\textit{lastname}\}@colorado.edu}}
\thanks{Digital Object Identifier (DOI): see top of this page.}
}

% Paper headers
% \markboth{IEEE Robotics and Automation Letters. Preprint Version. Accepted January, 2025}
% {Perrault \MakeLowercase{\textit{et al.}}: Kino-PAX: Highly Parallel Kinodynamic Sampling-based Planner} 
\AddToShipoutPictureBG*{%
  \AtPageUpperLeft{%
    \hspace{16.5cm}%
    \raisebox{-1.1cm}{%
      \makebox[0pt][r]{To appear in the Robotics and Automation Letters (RAL), March 2025.}}}}

% Use only for final RAL version

%\IEEEpubid{0000--0000/00\$00.00~\copyright~2021 IEEE}
% Remember, if you use this you must call \IEEEpubidadjcol in the second
% column for its text to clear the IEEEpubid mark.

\maketitle

% Abstract
\begin{abstract}
    Sampling-based motion planners (SBMPs) are effective for planning with complex kinodynamic constraints in high-dimensional spaces, but they still struggle to achieve \emph{real-time} performance, which is mainly due to their serial computation design.
    We present \textit{Kinodynamic Parallel Accelerated eXpansion} (\alg), a novel highly parallel kinodynamic SBMP designed for parallel devices such as GPUs. \alg grows a tree of trajectory segments directly in parallel. Our key insight is how to decompose the iterative tree growth process into three massively parallel subroutines. \alg is designed to align with the parallel device execution hierarchies, through ensuring that threads are largely independent, share equal workloads, and take advantage of low-latency resources while minimizing high-latency data transfers and process synchronization. This design results in a very efficient GPU implementation. We prove that \alg is probabilistically complete and analyze its scalability with compute hardware improvements. Empirical evaluations demonstrate solutions in the order of $10$ ms on a desktop GPU and in the order of $100$ ms on an embedded GPU, representing  up to $1000\times$ improvement compared to coarse-grained CPU parallelization of state-of-the-art sequential algorithms over a range of complex environments and systems.
\end{abstract}

\begin{IEEEkeywords}
Constrained Motion Planning, Motion and Path Planning, Computer Architecture for Robotic and Automation.
\end{IEEEkeywords}

%% SECTIONS:
 \section{Introduction}
    \label{sec:intro}

\IEEEPARstart{A}{utonomous} robotic systems are increasingly deployed in dynamic environments, requiring fast, reactive motion planning that accounts for the robot's complex kinematics and dynamics. Solving the kinodynamic motion planning problem \emph{quickly} is critical for ensuring both functionality and safety. \textit{Sampling-based motion planners} (SBMPs) have proven effective for various difficult problems, such as complex dynamics \cite{lavalle2001randomized, phillips2004guided, plaku2010motion, ladd2005fast,  sucan2011sampling}, complex tasks \cite{bhatia2010sampling, Maly:HSCC:2013, Lahijanian:TRO:2016}, and stochastic dynamics \cite{luders2010chance, 
% huynh2012incremental, 
% Luna:AAAI:2014,Luna:WAFR:2015,
ho:2022:ICRA,Ho:ICRA:2023}.
Nevertheless, they are typically designed for serial computation, limiting their speed to CPU clock rate.
% \qh{I'm not sure if the argument is complete. Just because they are designed for serial computation doesn't immediately make them limited}. 
While recent methods can find solutions within seconds for simple systems and tens of seconds for complex ones~\cite{lavalle2001randomized, phillips2004guided, plaku2010motion}, this is insufficient for \emph{real-time} reactivity. Given the plateau in improvements to serial computation and CPU clock speeds, parallel devices like GPUs offer promising speedups. However, current SBMP algorithms are inherently sequential and inefficient when parallelized. 
In this work, we aim to enable real-time motion planning for complex and high-dimensional kinodynamical systems by exploiting the parallel architecture of GPU-like devices.

In this paper, we introduce \alg, a highly parallel kinodynamic SBMP, designed to efficiently leverage parallel devices. \alg grows a tree of trajectory segments directly in parallel. Our key insight is that the iterative tree growth process can be decomposed into three massively parallel subroutines. 
We design \alg to align with the parallel execution hierarchy of these devices, ensuring that threads are largely independent, share equal workloads, and take advantage of low-latency resources while minimizing high-latency data transfers and process synchronizations. We provide an analysis of \alg, showing that it is probabilistically complete.  We also demonstrate, through several benchmarks, that \alg is robust to changes in hyperparameters and scalable to large dimensional systems.

In summary, our contributions are four-fold: (i) \alg, a highly parallel kinodynamic SBMP designed to leverage the parallel architecture of GPU-like devices, (ii) \rv{a discussion of \alg's hyperparameters and its efficient application to highly parallel devices}, (iii) a thorough analysis and proof of probabilistic completeness, and (iv) benchmarks showing the efficiency and efficacy of \alg for complex and high-dimensional dynamical systems. Our results show that \alg achieves up to three-orders-of-magnitude improvement in computation time compared to our baselines, which use CPU parallelization. 
% over $3$ dynamical systems. 
In all evaluated problems, \alg finds solutions in the order of $10$ milliseconds, representing significant progress in enabling real-time kinodynamic motion planning.

\subsection{Related Work}
    \label{sec:related}
    \paragraph*{Geometric Motion Planning}
% 
% \ml{add and discuss planning in microseconds paper}\np{\cite{thomason2024motions} is the microsecond paper. I could elaborate further on it?}
% \ml{yes!}
SBMPs have a long-standing history in addressing the geometric motion planning problem, as established by foundational works such as \textit{Probabilistic RoadMaps} (PRM)~\cite{kavraki1996probabilistic}, \textit{Expansive-Space Tree} (EST)~\cite{hsu1997path}, \textit{Rapidly-exploring Random Tree} (RRT) \cite{lavalle2001rapidly}, etc. In general terms, SBMP techniques involve finding a path from a starting configuration to a goal region by constructing a graph or tree, where nodes represent geometric configurations and straight line edges represent transitions in the configuration space. Traditionally, these algorithms operate serially on CPU devices. However, the increasing demand for rapid replanning for complex systems in unknown and dynamic environments has driven the development of parallelization methods for geometric SBMPs. These parallelized approaches have been applied to both CPU-based planners \rv{\cite{otte2013c, plaku2005sampling, sun2015high, ichnowski2012parallel,ichnowski2020concurrent, thomason2024motions}} and GPU-based implementations \cite{ichter2017group}. 

\rv{The CPU-based methods in \cite{otte2013c, plaku2005sampling, sun2015high} use coarse-grained parallelization, where threads independently construct trees and exchange search information. These methods offer straightforward implementation by leveraging existing serial SBMPs but they achieve limited improvements in planning rates, and are not applicable to many-core devices. More similar to our work, the approaches in \cite{ichnowski2012parallel, ichnowski2020concurrent, thomason2024motions} focus on fine-grained parallelization to accelerate the construction of a single tree. Works \cite{ichnowski2012parallel, ichnowski2020concurrent} introduce parallel RRT and RRT* methods that leverage thread-safe atomic operations and a novel concurrent data structure for efficient nearest neighbor search. In contrast, \cite{thomason2024motions} decomposes core operations of geometric SBMPs into unconventional data layouts that enable parallelism without specialized hardware. While these works present promising fine-grained parallelization techniques, they are not easily adaptable to kinodynamic planning and are unsuitable for many-core technology due to high inter-thread communication costs.}

% Work \cite{thomason2024motions} decomposes fundamental operations of geometric SBMPs, such as collision checking, into unconventional data layouts, enabling parallelism without requiring specialized hardware. Works \cite{otte2013c, plaku2005sampling, sun2015high} propose growing multiple trees in parallel. The algorithm in \cite{otte2013c} shares tree information between threads, improving the visibility of the search space. The method in \cite{plaku2005sampling} generates trees from different regions of the planning space, attempting to connect their branches. \cite{sun2015high} runs multiple RRT instances in parallel, selecting the most optimal solution. The works \cite{ichnowski2012parallel, ichnowski2020concurrent} present a fine-grained parallelization method for RRT and RRT*, utilizing thread-safe operations and a concurrent data structure for nearest neighbor search and node insertion. 

\rv{Most similar to our work, \cite{ichter2017group} leverages GPUs to solve the geometric path planning problem by adapting \textit{FMT*} \cite{janson2015fast} for parallelized graph search. Their implementation achieves orders-of-magnitude performance improvements over its serial counterpart; however, it is strictly limited to geometric problems.}
% The algorithm in \cite{ichter2017real} takes GPU-based geometric motion planning a step further by constructing a Pareto set of motion plans, which considers both collision probability and dynamic feasibility when calculating a path.
% \rv{Lastly, the methods presented in \cite{DBLP:journals/corr/abs-1710-03937, DBLP:journals/corr/abs-1902-09458} offer long-range motion planning through a highly parallelizable PRM construction phase and a reinforcement learning-based steering function.}
% While these algorithms have demonstrated orders of magnitude improvement over serial implementations, they are strictly limited to geometric problems.

% in \cite{thomason2024motions,otte2013c}, and \cite{ichter2017group,ichter2017real} require solving two-point boundary value problems, making them inapplicable to general dynamical systems.

\paragraph*{Kinodynamic Motion Planning}
To provide dynamically feasible and collision-free trajectories for systems with complex dynamics, a kinodynamic motion planning algorithm is used, as seen in traditional serial solutions such as \cite{lavalle2001randomized, phillips2004guided,  plaku2010motion, ladd2005fast, sucan2011sampling}. The details of the kinodynamic motion planning problem are formally discussed in Section \ref{sec:problem}; however, in general, these algorithms are tree-based and solve the problem by sequentially randomly extending trajectories until a path from a start state to a goal region satisfying all state constraints can be followed by a sequence of trajectory segments.

To achieve fast planning times, works \cite{plaku2010motion,sucan2011sampling} employ space discretization. Specifically, \cite{plaku2010motion} constructs a graph from discrete regions and uses it as a high-level planner to guide the motion tree.  However,
this approach can face the \emph{state-explosion} problem as the dimensionality of state space increases.
In contrast, \cite{sucan2011sampling} avoids this issue by using the discrete regions to track spatial information about the sparsity of the motion tree without constructing a graph.  In the design of \alg, we take inspirations from those planners, using discrete regions to guide the search. Similar to \cite{sucan2011sampling}, we use these regions for spatial information to ensure scalability. However, unlike previous work, \alg performs these operations in parallel subroutines. 

In contrast to the parallelized geometric planning solutions discussed above, parallelization for planning under the constraints of general dynamical systems is relatively understudied in the field. An approach to parallelization for the kinodynamic problem is a coarse-grained method, where multiple trees of classical SBMPs are generated in parallel, and the first solution found is returned \cite{746692}. This technique improves average-case performance \cite{wedge2008heavy}, and is employed in Section \ref{sec:experiments} to perform CPU-based parallelization as a baseline. However, the inherent sequential nature of these motion planners make them inefficient for massive parallelization. In this work, we propose a novel algorithm that enables efficient application to highly parallel devices.

% \ml{In design of our planner, we take inspirations from geometric planner in \cite{ichter2017group} and kinodynamic planners in \cite{plaku2010motion,sucan2011sampling}.  Talk about similartities blah blah... we avoid the problem of state-space explosion since we do not construct a discrete graph.}

% comparison against our novel GPU parallelization method

\section{Problem Formulation}
    \label{sec:problem}
    % For this work, we consider the problem of motion planning under dynamical constraints, where the evolution of the agent's state is described by a differential equation of the form:

% \begin{equation*}
%     \dot x = f(x, u)
% \end{equation*}

% where:
% \begin{itemize}
%     \item $x \in X$ is the state, and $X$ is the state space, $X \subset \mathbb{R}^n$.
%     \item $u \in U$ is the input control, and $U$ is the control space, $U \subset \mathbb{R}^l$.
%     \item $f(\cdot,\cdot)$ is a continuous integrable function.
% \end{itemize}

% Additionally, a trajectory $\gamma$ generated by applying $n$ consecutive control inputs, each for a duration $t_i$, where the total time is given by $T = \sum_{i=1}^{n} t_i$, is valid and meets the goal criteria if and only if:

% \begin{itemize}
%     \item $\forall t \in [0,T] : \gamma(t) \in X_{safe}$, where $X_{safe} \subseteq X$ is the set of states satisfying geometric and dynamical constraints.
%     \item $\gamma(T) \in X_{goal}$, where $X_{goal} \subseteq X$ is the set of states satisfying the goal region criteria.
% \end{itemize}

% In this work, we address the kinodynamic motion planning problem, which involves finding a geometrically and dynamically feasible trajectory from an initial state to a goal region. 

Consider a robotic system operating within a bounded workspace $W \subset \mathbb{R}^d$, where $d \in \{2, 3\}$. This workspace contains a finite set of obstacles $\mathcal{O}$, where each obstacle $o \in \mathcal{O}$ is a closed subset of $W$, i.e., \rv{$o \subset W$}.
The dynamics of the robot's motion is given by
% the following differential equation:
\begin{equation}
\label{eq:diffEq}
\dot{x}(t) = f(x(t), u(t)), 
% \quad x(t) \in X \subseteq \mathbb{R}^n , \quad u(t) \in U \subseteq \mathbb{R}^N,
\end{equation}
where 
$x(t) \in X \subset \mathbb{R}^n$ and $u(t) \in U \subset \mathbb{R}^N$ are the robot's state and control at time $t$, 
% $X$ and $U$ are the robot's state and input spaces, 
respectively, and $f:X \times U \to \mathbb{R}^n$ is the vector field.  We assume that $f$ is a Lipschitz continuous function with respect to both arguments, i.e, there exist constants \(K_x, K_u > 0\) such that for all \(x, x' \in X\) and \(u, u' \in U\),
$$\|f(x, u) - f(x', u')\| \leq   K_x \|x - x'\| + K_u \|u - u'\|.$$

% \ml{is it $x \in X$ or $x \in X$?  Also, what are $X,X$?}
% where $x \in X$ represents the state, $u \in U$ denotes the control input, and $f(\cdot, \cdot)$ is a continuous, integrable function that expresses the system's dynamics. Let $X \subseteq \mathbb{R}^d$ 
% \ml{oh! you define the sate space here. It should rather be defined first time it's use above. You could say ``$x \in X \subseteq \mathrm{R}^n$ is the state.'' Then it is understood that $X$ is the state space and it's $n$-dimensional.}
% denote the state space that is a smooth $d-dimensional$ manifold, and let $U \subseteq \mathbb{R}^D$ 
% \ml{same comment for $U$}
% denote the space of control vectors.

% Additionally, we define the set of states that satisfy the geometric and dynamic constraints is represented by $X_{safe} \subseteq X$
% \ml{how could a state satisfy the dynamic constraints?  System dynamics (differential equations) pose constraints on the time evolution of state.  If by dynamic constraints you mean state constraints, e.g., bound on velocity, then I think you should use a better terminology than ``dynamic'' since it could be confusing.
% See the problem formulation of this paper as and example how to define kinodynamic motion planning problem: \url{https://arxiv.org/pdf/2207.00576}
% }
% . The agent's initial condition is given by $x_{init} \in X_{safe}$, and the goal region is denoted by $X_{goal} \subseteq X_{safe}$.

In addition to motion constraints defined by the dynamics in~\eqref{eq:diffEq} and obstacles in $\mathcal{O}$, we consider state constraints, e.g., bound on the velocity.
To this end, we define the set of valid states, i.e., states at which the robot does not violate its state constraints and does not collide with an obstacle, as the \rv{valid} set and denote it by $\rv{X_{\free}} \subseteq X$.  
Then, given initial state $x_{\init} \in \rv{X_{\free}}$, time duration \(t_{f} \geq 0\), and control trajectory $\mathbf{u}: [0,t_{f}] \to U$, 
a \textit{state trajectory} $\mathbf{x}: [0,t_{f}] \to X$ is induced, where
\begin{align}
    % \mathbf{x}(0) &= x_{\init},\\
    \mathbf{x}(t) = x_{\init} + \int_{0}^{t} f(\mathbf{x}(\tau),\mathbf{u}(\tau)) d \tau \qquad \forall t \in [0,t_f].
    \label{eq:integral}
\end{align}
Trajectory $\mathbf{x}$ is called \textit{valid} if, $ \forall t \in [0,t_f]$, $x(t) \in \rv{X_{\free}}$.
% \begin{align*}
%     f(0) &= x_{\init}, \\
%     f(t') &= \int_{0}^t f(x(t),u(t)).
% \end{align*}

% . 

In motion planning, the interest is to find a valid trajectory $\mathbf{x}$ that visits a given goal set $X_\goal \subseteq \rv{X_{\free}}$.
% , i.e., $\mathbf{x}$ is valid and $\mathbf{x}(t) \in X_{\goal}$ for some $t \in [0,t_f]$.  
Therefore, by following this trajectory, the robot is able to respect all of its motion (kinodynamic) constraints, avoid collisions with obstacles, and reach its goal. 
% While there exist many algorithms that can solve this problem, they are often slow and cannot be used for online planning to provide, e.g., needed reactivity to deal with changing environments.  
In this work, we focus on kinodynamic motion planning with an emphasis on computational efficiency through parallelism.

\begin{problem}[Kinodynamic Motion Planning]
    \label{problem}
    Consider a robot with dynamics in \eqref{eq:diffEq} in workspace $W$ consisting of obstacle set $\mathcal{O}$.  Given an initial state $x_{\init} \in \rv{X_{\free}} \subseteq X$ and goal region $X_\goal \subseteq \rv{X_{\free}}$, \emph{efficiently} find a control trajectory $\mathbf{u}:[0,t_f] \to U$ such that its induced trajectory $\mathbf{x}$ through \eqref{eq:integral} is valid and reaches goal, i.e., $\mathbf{x}(0) = x_{\init}$ and
    \begin{align*}
        % \mathbf{x}(0) &= x_{\init}, && \\
        \mathbf{x}(t) &\in \rv{X_{\free}} && \forall t \in [0,t_f], \\
        \mathbf{x}(t) &\in X_\goal && \exists t \in [0,t_f].
    \end{align*}
\end{problem}

Note that this is a challenging problem.
The simpler problem of geometric motion planning (by ignoring dynamics) is already PSPACE-complete \cite{reif1979complexity}, and the addition of kinodynamic constraints makes finding a solution considerably more difficult due to the increase in search space dimension and dynamic complexities \cite{donald1993kinodynamic,laumond1993controllability}. Existing 
% serial implementations 
algorithms
find solutions in the order of seconds for simple (e.g., linear) systems and tens of seconds for more complex non-linear systems \cite{plaku2010motion, phillips2004guided, lavalle2001randomized} on standard benchmark problems. When combined with the need for fast replanning in, e.g., unknown and changing environments, finding solutions in real-time (milliseconds) becomes crucial for ensuring the functionality and safety of autonomous systems.

With the availability of onboard GPUs, parallel computation provides a promising approach for finding solutions quickly. 
% However, classical kinodynamic motion planners cannot take full advantage of GPUs, due to their inherently sequential subroutines that have strong interdependence. 
% We aim to achieve efficiency via a highly parallel algorithm that can take advantage processes with highly parallel architecture, e.g., GPUs.
% We aim to achieve efficiency through a highly parallelizable algorithm capable of leveraging processors with advanced parallel architectures, such as GPUs.
Hence, in our approach, we focus on achieving efficiency through a highly parallelizable algorithm.
% capable of leveraging processors with advanced parallel architectures, such as GPUs.

% \ml{another good discussion but I'd frame it differently.  I'd say CUDA-enabled GPUs are widely available these days, enabling parallel computation.  However, classical motion planners due to their design cannot take advantage of the computational power that GPUs offer. Our goal in this paper is to design an algorithm that fully exploits the parallelism and many-core GPU architecture...  Then, state the problem, which should include ``parallel computing'' or ``GPU'' in it.}
% Our approach to efficiently solving the kinodynamic motion planning problem leverages CUDA-enabled GPUs, which require careful algorithm design to fully exploit the parallelism and many-core GPU architecture.

\section{Kino-PAX}
    \label{sec:alg}
    
% The efficiency of the search in KGMT depends on the tree's ability to rapidly expand in all space dimensions. To achieve this, KGMT employs a high-level discrete representation of the tree’s exploration progress in each region, effectively guiding its expansion in all directions. When the low-level motion planner has accurate data on search progress, the many-core architecture of the GPU can be better leveraged to outwardly expand the tree as a massively parallel process. 

% \begin{figure}[t]
%     \centering
%     \includegraphics[width=0.98\columnwidth]{figures/KGMT_flowChart.drawio (4).pdf}
%     \caption{Overview of \alg.}
%     \label{fig:G-EST}
% \end{figure}

\begin{figure*}[ht!]
\label{test}
    \centering
    \begin{subfigure}[t]{0.07\textwidth}
        \centering
        \vspace{-36mm}
        \includegraphics[width=\textwidth]{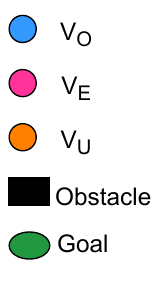}
    \end{subfigure}
    % ~
    \begin{subfigure}[b]{0.17\textwidth}        
    \centering
        \includegraphics[width=\textwidth]{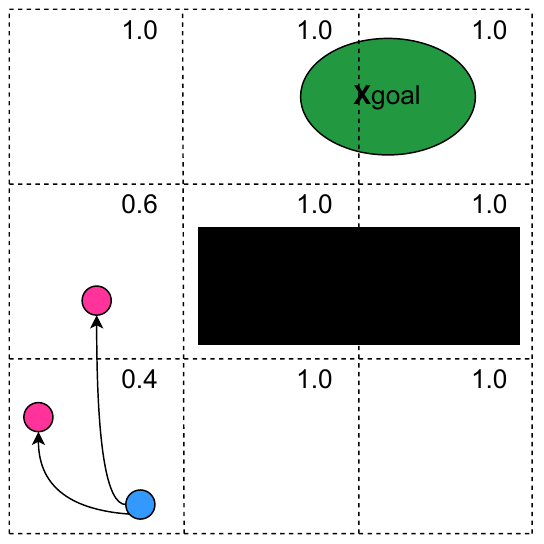}
        % \caption{$V_E$ for extension}
         \caption{}
        \label{fig:iteration1}
    \end{subfigure}
    ~
    \begin{subfigure}[b]{0.17\textwidth}
        \centering
        \includegraphics[width=\textwidth]{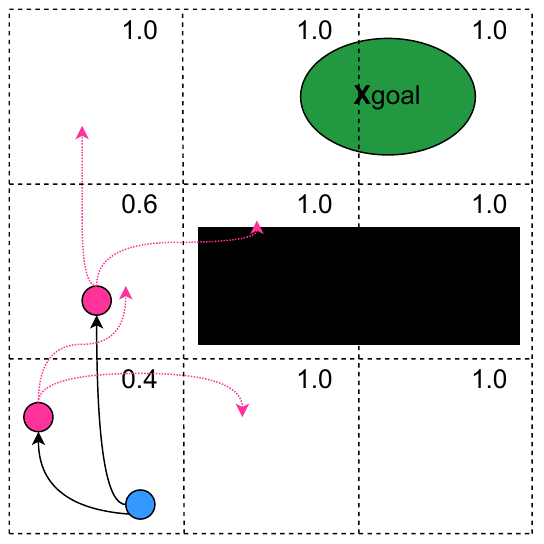}
        % \caption{Extend $V_E$ nodes}
        \caption{}
        \label{fig:iteration2}
    \end{subfigure}
    ~
    \begin{subfigure}[b]{0.17\textwidth}
        \centering
        \includegraphics[width=\textwidth]{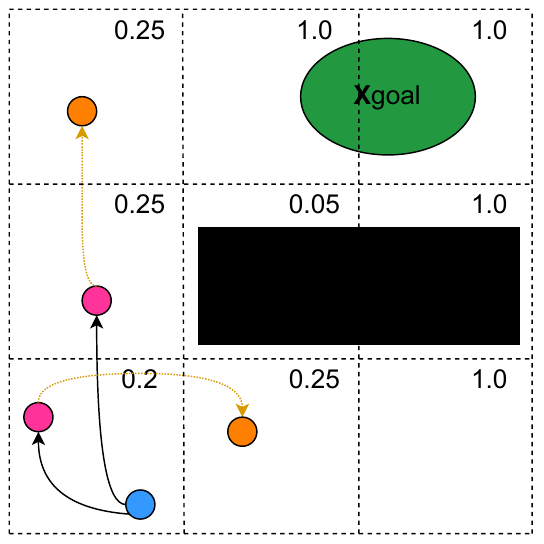}
        % \caption{Update $V_U$}
        \caption{}
        \label{fig:iteration3}
    \end{subfigure}
    ~
    \begin{subfigure}[b]{0.17\textwidth}
        \centering
        \includegraphics[width=\textwidth]{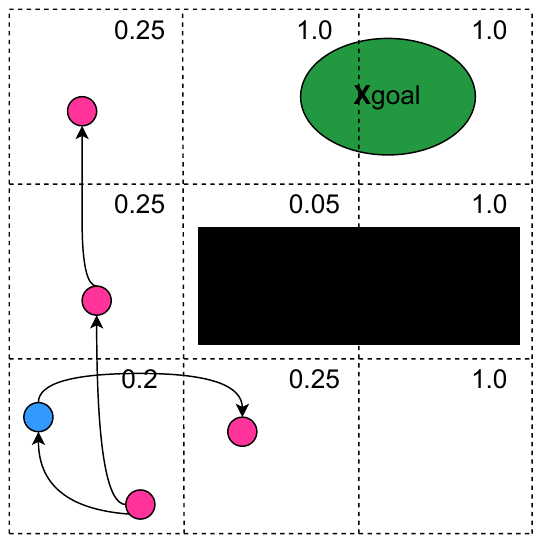}
        % \caption{Update $V_E$ and $V_O$}
        \caption{}
        \label{fig:iteration4}
    \end{subfigure}
    ~
    \begin{subfigure}[b]{0.17\textwidth}
        \centering
        \includegraphics[width=\textwidth]{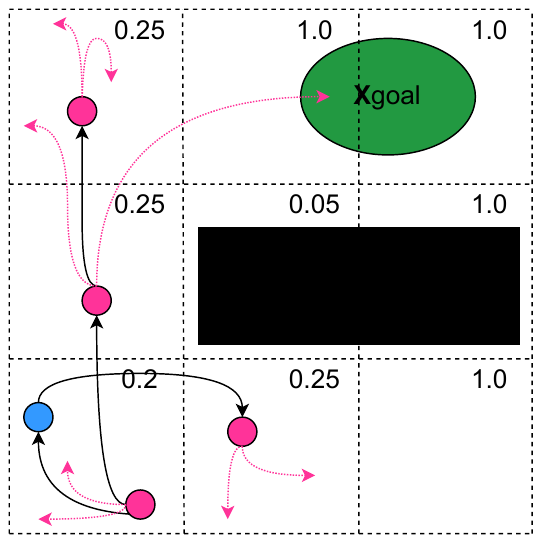}
        % \caption{Update $V_E$ and $V_O$}
        \caption{}
        \label{fig:iteration5}
    \end{subfigure}
    % \caption{
    % Illustration of a single iteration in the \alg expansion process. (a) The current sets $V_E$ and $V_{O}$, where the numbers in each grid cell represent the value of $P_{accept}(\mathcal{R}_i)$. (b) Expansion of each node in $V_E$ with a branching factor of $\lambda = 2$. (c) Acceptance of promising samples and the update of $P_{accept}(\mathcal{R}_i)$. (d) The updated set $V_E$, prepared for expansion in the next iteration.}
    % \label{fig:iteration_grid}
    \caption{ \rv{ Illustration of the \alg expansion process:  
(a) The current sets $V_E$ and $V_{O}$, where the numbers in each grid cell represent the value of $P_{accept}(\mathcal{R}_i)$.  
(b) Expansion of \( V_E \) with branching factor \( \lambda = 2 \).  
(c) Acceptance of promising samples and update of \( P_{\text{accept}}(\mathcal{R}_i) \).  
(d) Updated \( V_E \), ready for the next iteration. 
(e) Expansion of \( V_E \), producing a valid trajectory to \( X_{\text{goal}} \).
}}
    \label{fig:overview}
\end{figure*}

Our approach to Problem~\ref{problem} is a highly parallel algorithm that is able to exploit the many-core architecture of GPU-like processors. To achieve efficient performance on these high-throughput devices, it is crucial that our algorithm complements the execution hierarchy of such processors to optimize resource utilization. For the development of this algorithm, we follow the guidance of \cite{kirk2016programming,guide2020cuda} and base our development on three key principles: 
(i) \emph{thread independence}, the ability for each thread in a program to execute without being dependent on the state or result of other threads; 
(ii) \emph{even workloads across threads}, each thread is assigned an equal or nearly equal number of operations throughout its execution; 
(iii) \emph{utilization of low-latency memory}, groups of threads utilize low-latency memory to share information and reduce the number of higher-latency global memory accesses.

% \begin{itemize} 
%     \item \emph{Thread independence} — The ability for each thread in a program to execute without being dependent on the state or result of other threads.
%     \item \emph{Even workloads across threads} — Each thread is assigned an equal or nearly equal number of operations throughout its execution.
%     \item \emph{Utilization of low-latency memory} —  Groups of threads utilize low-latency memory to share information and reduce the number of higher-latency global memory accesses.
% \end{itemize}

With these principles in mind, we introduce \emph{Kinodynamic Parallel Accelerated eXpansion} (\alg), a highly parallel kinodynamic SBMP. 
% An overview of \alg is shown in Fig.~\ref{fig:G-EST}.
\alg grows a tree of trajectory segments in parallel. This is achieved by decomposing the iterative tree growth process, i.e., selection of nodes, extension, validity checking, and adding new nodes to the tree, into three massively parallel subroutines. Each subroutine follows the key principles of \emph{thread independence}, \emph{balanced workloads}, and \emph{low-latency memory utilization}. Additionally, to ensure fast and efficient planning iterations, 
% we minimize CPU-GPU communication during synchronization steps between subroutines.
we minimize the communication needed in the synchronization steps between subroutines, e.g., CPU-GPU communication.

At each iteration of \alg, a set of nodes in the tree is expanded in parallel. Each sample is extended multiple times through random sampling of controls, also in parallel. We dynamically adjust the number of extensions in each iteration to maintain an effective tree growth rate, ensuring efficient usage of the device's throughput. After extension, a new set of nodes are selected independently to be propagated in the next iteration.

% To guide the search process, \alg employs a high-level space decomposition approach that is well suited for parallel computation. The decomposition estimates the exploration progress made in each region, helping to identify promising tree nodes for expansion and acceptance of new samples in less visited space. This decomposition allows threads in \alg to act independently when adding new nodes to the tree and when assessing whether an existing node is promising for extension. As more trajectory segment data becomes available, the estimate of promising space regions is improved, allowing \alg to focus on propagating a large number of favorable samples into less explored areas of the space.

To guide the search process, \alg, similar to \cite{plaku2010motion,sucan2011sampling}, employs a high-level space decomposition approach. We designed this method to be well-suited for parallel computation. This decomposition estimates exploration progress in each region, allowing threads to act independently when adding new nodes to the tree and identifying promising nodes for extension. As more trajectory segment data becomes available, the estimate of promising space regions is improved, allowing \alg to focus on propagating a large number of favorable nodes into less explored areas of the space.

% By directing which tree samples to propagate from in future parallel expansions, the algorithm effectively manages the growth of the tree.

% In early iterations, when the tree is relatively small, each sample undergoes numerous extensions to rapidly expand the search space and jumpstart the tree growth process. As the size of the tree increases, \alg reduces the number of extensions per sample to prevent the serialization of operations due to device throughput limitations. 

% To guide this process, a high-level space decomposition is used to estimate the exploration progress within each discrete region of the space. This decomposition identifies promising regions for new sample acceptance and directing future node expansions. As the tree iteratively grows in a massively parallel process, new information on valid and invalid trajectories guides \alg in determining where to focus the next parallel expansion.

% The design of \alg considers the key points discussed in Section \ref{sec:CUDA_computing} to develop a novel kinodynamic motion planning algorithm that efficiently utilizes GPU resources. 

% \alg operates by generating trajectory segments in parallel, using a set of samples known as the frontier. Each sample is extended a number of  times to better utilize the device's throughput. \alg operates by expanding multiple nodes in parallel, simultaneously branching the tree in all relevant directions within the search space.

% The resulting structure resembles a breadth-first search, with branches flooding outward in all directions of the state space simultaneously.

\subsection{Core Algorithm}

Here, we present a detailed description of \alg. Pseudocode of \alg is presented in Alg.~\ref{algo: alg}, with subroutines Algs.~\ref{algo: Propagate}, \ref{algo:UpdateEstimates}, and \ref{algo:UpdateNodeSets}, 
% A visual overview of \alg is in Fig.~\ref{fig:G-EST}, 
and a full planning iteration is illustrated in Fig.~\ref{fig:overview}. \alg organizes samples into three distinct sets: $V_U, V_O, V_E$. The set $V_{U}$ consists of newly generated promising samples that have not yet been added to the tree. The set $V_{O}$ includes tree nodes that are not currently considered for expansion; intuitively, these nodes are located in densely populated or frequently invalid regions of the search space. Finally, $V_E$ comprises the set of nodes that are flagged for parallel expansion. Further, \alg maintains the spatial search progress information using a partition of the state space denoted by $\mathcal{R}$.
% \footnote{The space decomposition can also be made in the workspace, but it often leads to inefficiency as examined in Sec.~\ref{sec:experiments}. \np{need to add comparison metric}}
% \ml{first need to explain what $V_E, V_U, V_O$ are before getting into details. e.g., \alg keeps track of tree nodes using three sets: ...  Also explain the regions, e.g., \alg maintains the spatial information using a partition of state space denoted by $\mathcal{R}$.  Then, explain the lines of the algorithm below}.
\begin{algorithm}[t]
    \caption{\alg}
    \label{algo: alg}
    \SetKwInOut{Input}{Input}\SetKwInOut{Output}{Output}
    \Input{$x_{\init}, X_{\goal}, t_{max}$}
    \Output{Solution trajectory $\mathbf{x}$}
    
    \SetKwFunction{Propagate}{Propagate}
    \SetKwFunction{UpdateEstimates}{UpdateEstimates}
    \SetKwFunction{UpdateNodeSets}{UpdateNodeSets}

    $\mathcal{T} \gets$ Initialize tree with root node $x_{\init}$ \\
    $V_E \gets \{x_{\init}\}$, $V_{U}, V_{O} \gets \emptyset$ \\
    Initialize $\mathcal{R}$ with $P_{accept}(\mathcal{R}_i) = 1$ for each $\mathcal{R}_i \in \mathcal{R}$\\
    % For each region $\mathcal{R}_i$ in $\mathcal{R}$, set $\text{Accept}(\mathcal{R}_i) = 1$, $n_{invalid}(\mathcal{R}_i) = 0$, and $n_{valid}(\mathcal{R}_i) = 0$.

    % For each region $\mathcal{R}2_i$ in $\mathcal{R}2$, mark $\mathcal{R}2_i$ as unvisited.

    \While{$ElapsedTime < t_{max}$}{
        \Propagate{$V_E, V_{U}, \mathcal{R}, \lambda$} \\
        \UpdateEstimates{$\mathcal{R}$} \\
        $\mathbf{x} \gets$ \UpdateNodeSets{$V_{U}, V_E, V_{O}, \mathcal{T}, \mathcal{R}, X_{\goal}$} \\
        
        \lIf{$\mathbf{x} \neq null$}{
            \KwRet{$\mathbf{x}$}
        }
    }
    
    \KwRet{$null$}
\end{algorithm}

\subsubsection{Initialization}

In Alg.~\ref{algo: alg}, \texttt{\alg} takes as input the initial state $x_{\init}$, a goal region $X_{\goal}$, and a maximum execution time $t_{max}$. In Lines 1-2, a tree $\mathcal{T}$ is initialized with $x_{\init}$ at its root. 
Additionally, the set $V_E$ is initialized with the state $x_{\init}$ and the sets $V_{U}$ and $V_{O}$ are initialized as empty.
% The sets $V_{U}$, $V_{O}$, and $V_E$ partition all samples in \alg into three distinct categories, with each sample belonging to only one set. This partitioning is done using memory-efficient boolean masks, where a sample in \alg has a $true$ value for the corresponding index in one of $V_U$, $V_O$, or $V_E$. In implementation, \alg accesses these sets via a parallel efficient graph-search, as described in \cite{merrill2012scalable} and \cite{ichter2017group}.
% partitioning facilitates efficient memory management, in which each set can be represented by a boolean mask over $\mathcal{T}$ \qh{What does a boolean mask over $\mathcal{T}$ mean? It might be good to talk about the tree consisting of nodes and edges}.
In Line 3, the space decomposition \(\mathcal{R}\) is initialized in the state space, where the decomposition consists of non-overlapping regions, such that:
$$
X = \cup_{i=1}^n \mathcal{R}_i \;\;\; \text{ and } \;\;\;
\forall i\neq j \in \{1,\ldots,n\}, \;\;\; \text{ Int}(\mathcal{R}_i) \cap \text{Int}(\mathcal{R}_j) = \emptyset,
$$
% 
% Next, in Line 3 of Alg.~\ref{algo: alg}, the space decomposition \(\mathcal{R}\) and a fine overlaid decomposition \(\mathcal{R}2\) 
% \ml{consider a different name than R2, e.g., $\mathcal{R}^{\text{refined}}$. Also need to explain what this is perhaps after $R$ is defined.}
% are initialized. These decompositions can be defined in either the workspace \(\mathcal{W}\) or the state space \(X\)
% \ml{can it be in the workspace?  Does it not violate prob. completeness\footnote{THIS IS HOW TO MAKE A FOOTNOTE.}? Also, need to say something about which space the user should choose to decompose or refer the reader to a place they read more about it, e.g., A detailed discussion on this choice is provided in Sec. blah blah...}
% . In the state space, the decomposition consists of non-overlapping regions, such that:
% $$
% X = \bigcup_{i=1}^n \mathcal{R}_i \quad \text{and}
% \quad \forall i\neq j \in \{1,\ldots,n\}, \;\;\; \text{ Int}(\mathcal{R}_i) \cap \text{Int}(\mathcal{R}_j) = \emptyset,
% $$
% 
% \np{I removed the getRegion paragraph.}
% Each state \(x \in X \) is uniquely associated with a region \(\mathcal{R}_i\) using the function \(\texttt{getRegion}(x)\), which maps \(x\) to \(\mathcal{R}_i\) if and only if \(x \in \mathcal{R}_i\) \qh{where is this function? And this feels like too much implementation detail}. \np{I was following the lead of syclop paper. Do you think I should include the psuedocode for it or just somehow remove the references from the pseudocode?}
% 
where Int$(\mathcal{R}_i)$ is the interior of $\mathcal{R}_i$. Each $\mathcal{R}_i$ is then further partitioned to a set of finer regions. We denote the $k$-th sub-region of $\mathcal{R}_i$ by $\mathcal{R}^k_i$, i.e., $\mathcal{R}_i =  \cup_{k=1}^{n'} \mathcal{R}^k_i$. For each region $\mathcal{R}_i \in \mathcal{R}$, several metrics are calculated to assess the exploration progress of the tree. These metrics, adapted from \cite{plaku2010motion}, are designed to be effective in identifying promising regions for systems with complex dynamics and are well suited for parallelism. Specifically, $\alg$ updates the following metrics for each region, $\mathcal{R}_i$, after each iteration of parallel propagation to continually guide the search process:
\begin{itemize} 
    \item $Cov(\mathcal{R}_i)$: estimates the progress made by the tree planner in covering $\mathcal{R}_i$; 
    \item $FreeVol(\mathcal{R}_i)$: estimates the free volume of $\mathcal{R}_i$. 
\end{itemize}
The exact expressions for $Cov(\mathcal{R}_i)$ and $FreeVol(\mathcal{R}_i)$ are the same as in~\cite{plaku2010motion}, which we also show in Sec.~\ref{sec:node-selection}.

These metrics determine a score value, $Score(\mathcal{R}_i)$ and subsequently a probability \textbf{$P_{accept}(\mathcal{R}_i)$}, which aid \alg in adding favorable samples to $\mathcal{T}$ and assessing if an existing node should be extended. During initialization, all $P_{accept}(\mathcal{R}_i)$ values are set to 1.

% Based on these metrics, a sample acceptance probability, denoted as \textbf{$P_{accept}(\mathcal{R}_i)$}, is calculated to determine the likelihood of adding a newly generated sample within region $\mathcal{R}_i$ to the tree $\mathcal{T}$. This probability also determines whether a sample located in the set $V_{O}$ will be moved to the set $V_E$ for propagation in the next iteration of the parallel tree expansion.\qh{This paragraph needs polishing... We want to get this main idea but the details of how it's used is further down in the subroutines. }

\subsubsection{Node Extension}

\begin{algorithm}[t]
    \caption{Propagate}
    \label{algo: Propagate}
    \SetKwInOut{Input}{Input}\SetKwInOut{Output}{Output}
    \Input{$V_E, V_{U}, \mathcal{R}, \lambda$}
    \Output{Updated $V_{U}$}
    \ForEach{$x \in V_E$ }{
        \For{$i = 1, \dots, \lambda$}{
            % Propagate $x$, generating $x'$ \qh{this propagation needs more detail... sample controls, sample time duration}\;
            \rv{Randomly}
            sample $u$ and $dt$ \;
            $x' \gets \rv{\texttt{PropagateODE}}(x, u, dt)$ \;
            Map $x'$ to region $\mathcal{R}^k_i$ \;
            
            \If{the trajectory from $x$ to $x'$ is valid}{
                Increment $n_{valid}(\mathcal{R}_i)$ \;
               \If{$\mathcal{R}^k_i$ unvisited \textbf{or} with $P_{accept}(\mathcal{R}_i)$}{
                    Add $x'$ to $V_{U}$ \;
                }
            }
            \Else{
                Increment $n_{invalid}(\mathcal{R}_i)$ \;
            }
        }
    }
\end{algorithm}

After initialization, the main loop of the algorithm begins \rv{(Alg.~\ref{algo: alg}, Lines 4–8)}. In each iteration, the \texttt{Propagate} (Alg.~\ref{algo: Propagate}) function is called to propagate the set $V_E$ in parallel \rv{(Alg. \ref{algo: alg}, Line 5, Fig.~\ref{fig:iteration2})}. Each node $x \in V_E$ is expanded $\lambda \in \mathbb{N^+}$ times using $\lambda$ threads, where each thread handles one expansion of $x$ (Alg.~\ref{algo: Propagate}, Lines 1–2). 
\rv{For each thread, a control $u \in U$ and a time duration $dt \in (0, T_{prop}]$, where $T_{prop}$ is a user-defined constant that sets the maximum propagation time, are randomly
sampled, and the node's continuous state $x$ is propagated using dynamics in \eqref{eq:diffEq} to generate a new sample state $x'$ (Alg.~\ref{algo: Propagate}, Lines 3–4).}  
Next, the corresponding region of $x'$, $\mathcal{R}^k_i$, is calculated (Alg.~\ref{algo: Propagate}, Line 5). Then, in Line 6, the trajectory segment from $x$ to $x'$ is checked for validity, i.e., if it is in \rv{$X_\free$}. Throughout the trajectory segment, a user-defined collision check is performed (our implementation uses a coarse-phase bounding volume hierarchies method discussed in \cite{ichter2017group}).
% At each time step, a coarse-phase collision check is performed by creating a bounding box region $b \subseteq W$ around each $dt$ interval of the trajectory. Using bounding volume hierarchies, the segment is considered collision-free if the bounding box $b$ does not intersect any obstacles, i.e., if $b \cap \mathcal{O} = \emptyset$ \qh{is this collision checker necessary? or is it just the one we used?} \np{It can be any checker. this is just what we used.}. 
If the extended segment is valid, we increment the total number of valid samples in $\mathcal{R}_i$ \rv{(Alg.~\ref{algo: Propagate}, Line 7)}. Next, $x'$ is added to the set $V_U$ if its corresponding region $\mathcal{R}^k_i$ is unvisited; if $\mathcal{R}^k_i$ already contains a node, then $x'$ is added to $V_U$ with probability $P_{accept}(\mathcal{R}_i)$, which favors promising samples (Alg.~\ref{algo: Propagate}, Lines 8-9, Fig.~\ref{fig:iteration3}). Alternatively, if the trajectory segment is invalid, we increment the count of invalid samples in $\mathcal{R}_i$, as shown in Line 11. This information guides future propagation iterations away from regions that are frequently invalid, improving search efficiency. 

\begin{algorithm}[t]
    \caption{UpdateEstimates}
    \label{algo:UpdateEstimates}
    \SetKwInOut{Input}{Input}
    \SetKwInOut{Output}{Output}
    \Input{$\mathcal{R}$}
    \Output{Updated estimates for each region $\mathcal{R}_i$}
    
    \ForEach{$\mathcal{R}_i \in \mathcal{R}_{avail}$}{
        \texttt{UpdateFreeVol}($\mathcal{R}_i$)\;
        \texttt{UpdateCoverage}($\mathcal{R}_i$)\;
        \texttt{UpdateScore}($\mathcal{R}_i$)\;
    }

    \ForEach{$\mathcal{R}_i \in \mathcal{R}_{avail}$}{
        \texttt{UpdateAccept}($\mathcal{R}_i$)\;
    }
\end{algorithm}

\subsubsection{Node Selection}
\label{sec:node-selection}
After all samples in $V_E$ have been expanded, the \texttt{UpdateEstimates} (Alg.~\ref{algo:UpdateEstimates}) subroutine is called \rv{(Alg.~\ref{algo: alg}, Line 6)}. In this subroutine, metrics for each visited region (i.e., a region with a node $x \in \mathcal{T}$), denoted by $\mathcal{R}_{avail}$, are updated in parallel, with a thread handling a unique region $\mathcal{R}_i \in \mathcal{R}_{avail}$, calculating $Cov(\mathcal{R}_i)$ and $FreeVol(\mathcal{R}_i)$ (Alg.~\ref{algo:UpdateEstimates}, Lines 1-3). For each thread, $Cov(\mathcal{R}_i)$ is set to the number of visited sub-regions within $\mathcal{R}_i$, and $FreeVol(\mathcal{R}_i)$ is calculated as
\begin{equation}
    \label{eq:FreeVol}
    FreeVol(\mathcal{R}_i) = \frac{\big(\delta + n_{valid}(\mathcal{R}_i)\big)\cdot vol(\mathcal{R}_i)}{\delta + n_{valid}(\mathcal{R}_i) + n_{invalid}(\mathcal{R}_i)},
\end{equation}
where $\delta > 0$ is a small constant, and $vol(\mathcal{R}_i)$ represents the mapped workspace volume of the region $\mathcal{R}_i$. Subsequently, on Line 4 of Alg.~\ref{algo:UpdateEstimates}, each thread calculates its corresponding $Score(\mathcal{R}_i)$ value with
\small
\begin{equation}
    \label{eq:R_iw}
    Score(\mathcal{R}_i)= \frac{\text{FreeVol}^4(\mathcal{R}_i)}{(1 + \text{Cov}(\mathcal{R}_i))(1 + (n_{valid}(\mathcal{R}_i) + n_{invalid}(\mathcal{R}_i))^2)},
\end{equation}
\normalsize
which prioritizes regions that are less visited and have a high free volume and low coverage.

% Our method for calculating  $Cov(\mathcal{R}_i)$, $FreeVol(\mathcal{R}_i)$ and $Score(\mathcal{R}_i)$ are particularly well suited for parallelism because each thread follows the same sequence of operations (\emph{even workloads across threads}) and threads act independently (\emph{independence between threads}).

Once all score values have been updated, each visited region's  $P_{accept}(\mathcal{R}_i)$ probability is refined (Alg.~\ref{algo:UpdateEstimates}, Lines 5-6, Fig.~\ref{fig:iteration3}). This process, again, is done in parallel with a thread being designated to a unique $\mathcal{R}_i \in \mathcal{R}_{avail}$ and $P_{accept}(\mathcal{R}_i)$ being set by
\begin{equation}
    \label{eq:Accept}
    P_{accept}(\mathcal{R}_i) = \min \left\{1, \;\;\;
    \frac{Score(\mathcal{R}_i)}{\sum_{\mathcal{R}_j \in \mathcal{R}_{\text{avail}}} Score(\mathcal{R}_j)} + \epsilon \right\},
\end{equation}
where $0<\epsilon \ll 1$ is a constant and $\mathcal{R}_{avail} \in \mathcal{R}$ represents the set of regions that contain a node in $\mathcal{T}$. We note that the expressions for the metrics in \eqref{eq:FreeVol}-\eqref{eq:Accept} are taken from~\cite{plaku2010motion}.
% Since this calculation depends on the sum of scores from each available region, \alg shares information via the small amount of on-chip low-latency memory and then performs a hierarchical reduction, as described in \cite{kirk2016programming}.

% \qh{Do we want to mention somewhere that this method minimizes uneven workloads across device?}After all samples in $V_E$ have been expanded, the values of \textbf{$Cov(\mathcal{R}_i)$}, \textbf{$FreeVol(\mathcal{R}_i)$}, and \textbf{$P_{accept}(\mathcal{R}_i)$} are updated in parallel \qh{what do you mean updated in parallel? Do multiple threads/workers never have to handle different $R_i$?} for each region in $\mathcal{R}$, in line 8 of \ref{algo: alg} and detailed in subroutine \ref{algo:UpdateEstimates}, \texttt{UpdateEstimates}. On line 2 of the subroutine, each thread independently updates its free volume \qh{how can each thread independently update only its own free volume? How about information from other threads?} using the following equation:

% \begin{equation}
% \label{eq:FreeVol}
% \text{FreeVol}(\mathcal{R}_i) = \frac{\epsilon + n_{valid}(\mathcal{R}_i)}{\epsilon + n_{valid}(\mathcal{R}_i) + n_{invalid}(\mathcal{R}_i)} \times \text{vol}(\mathcal{R}_i)
% \end{equation}

Once the $P_{accept}(\mathcal{R}_i)$ probabilities have been updated for all available regions, the \texttt{UpdateNodeSets} subroutine is called \rv{(Alg.~\ref{algo: alg}, Line $7$, Fig.~\ref{fig:iteration4})}. In Lines $1-3$ of Alg.~\ref{algo:UpdateNodeSets}, we remove samples from the expansion set $V_E$ randomly with probability $1 - P_{accept}(\mathcal{R}_i)$, ensuring that promising samples remain in $V_E$. Then, we move the newly generated samples, $V_U$, to $\mathcal{T}$ and add them to the expansion set $V_E$. If any newly generated nodes satisfy goal criteria, we return the valid trajectory $\mathbf{x}$ (Alg.~\ref{algo:UpdateNodeSets}, Lines 4-6). Finally, we move inactive samples in $V_{O}$ to the expansion set if deemed promising with the updated search information (Alg.~\ref{algo:UpdateNodeSets}, Lines 7-9).

\alg repeats the main loop of \texttt{Propagate}, \texttt{UpdateEstimates} and \texttt{UpdateNodeSets} until a solution trajectory $\mathbf{x}$ that solves Problem~\ref{problem} is returned, or a user-defined time limit $t_{max}$ is surpassed.

\begin{algorithm}[t]
    \caption{UpdateNodeSets}
    \label{algo:UpdateNodeSets}
    \SetKwInOut{Input}{Input}
    \SetKwInOut{Output}{Output}
    \Input{$V_{U}, V_E, V_{O}, \mathcal{T}, \mathcal{R}, X_{\goal}$}
    \Output{Trajectory if a goal is found, otherwise $null$}
    
    \ForEach{$x \in V_E$ }{
        Map $x$ to $\mathcal{R}_i$\;
        Move $x$ to $V_O$ with probability $1-P_{accept}(\mathcal{R}_i)$;
        % \If{$uniform(0,1) < 1 - P_{accept}(\mathcal{R}_i)$}{
        %     Remove $x$ from $V_E$ and place in $V_{O}$\;
        % }
    }

    \ForEach{$x \in V_{U}$ }{ 
        Move $x$ from $V_{U}$ to $V_E$ and $\mathcal{T}$\;
        \lIf{$x \in X_{\goal}$}{
            \Return Trajectory $x_{\init}$ to $x$
        }
    }
    \ForEach{$x \in V_{O}$ }{
        Map $x$ to $\mathcal{R}_i$\;
        Move $x$ to $V_E$ with probability $P_{accept}(\mathcal{R}_i)$;
    }

    \Return $null$
\end{algorithm}

\section{\rv{Tuning and Performance Discussion}}
    \label{sec:GPU_implementation}
    % These GPUs operate using a three-level execution hierarchy. At the lowest level are \emph{warps}, which consist of groups of 32 threads that execute the same instruction simultaneously on different data. If threads within a warp follow divergent execution paths, such as those caused by conditional statements, some threads become idle, reducing efficiency. The next level of the CUDA hierarchy consists of blocks, which are groups of threads that can share data through a small amount of on-chip shared memory. This memory enables fast communication and synchronization among threads within the same block. However, threads from different blocks cannot share memory directly. Finally, at the highest level, blocks are organized into grids, where they are dispatched to the device for execution.

% \rv{In this section, we discuss how \alg can be tuned to match a problem's difficulty, present the properties of \alg that enable efficient parallelism, and analyze its scalability.}

In this section, we discuss how \alg can be tuned to match a problem's difficulty, and present the properties of \alg that enable efficient parallelism.

\subsection{Tuning Parameter}

In practice, due to the limitations of device memory and the high cost of vector resizing operations, we predefine the maximum size of the tree rather than constraining the runtime, similar to the approach taken in PRM. This introduces a hyperparameter for \alg, denoted as $t_e$,  which we refer to as the expected tree size. Specifically, $t_e$ corresponds to the maximum number of nodes in \alg and should be tuned according to the difficulty of the problem at hand.

Varying, $t_e$ has two main effects on the performance of \alg. 
Firstly, an increase in $t_e$ increases the branching factor $\lambda$ which is updated for each iteration of parallel propagation and is set according to
\begin{equation}
\label{eq:BF}
    \lambda = \min \left\{ \lambda_{\text{max}}, \; \left\lfloor \frac{t_e - |\mathcal{T}|}{|V_E|} \right\rfloor \right\},
    % \lambda = \min \left\{ \lambda_{\text{max}}, \; \left\lfloor (t_e - |\mathcal{T}|)/|V_E| \right\rfloor \right\},
\end{equation}
where $\lambda_{max}$ is the user-defined maximum branching factor and $|\mathcal{T}|$ and $|V_E|$ are the numbers of nodes in $\mathcal{T}$ and $V_E$, respectively. Eq.~\eqref{eq:BF} ensures that in the early iterations, when $|\mathcal{T}|$ is much smaller than $t_e$, a larger $\lambda$ is used. This approach effectively uses available throughput and accelerates the initial stages of tree propagation. \rv{As $|\mathcal{T}|$ approaches $t_e$ and $|V_E|$ grows large, a smaller $\lambda$ is used to stabilize the growth rate of $\mathcal{T}$.}

% \rv{A larger $\lambda$ also leads to more trajectory segments originating from the same state, resulting in a denser tree structure. As $|\mathcal{T}|$ approaches $t_e$ and $|V_E|$ becomes large, a smaller $\lambda$ is used to prevent device serialization, which occurs when the parallel subroutine requires more threads than the device can support. Device serialization is undesirable, as it forces the subroutine to execute in multiple waves of computation, with intermittent idle periods between waves. Consequently, a smaller $\lambda$ results in fewer trajectory segments stemming from the same state, leading to less densely located nodes and increased tree sparsity.}

% amount of average concurrent threads during node expansion
% \ml{not clear how this is so.  }
% by making the branching factor $\lambda$ larger
% \ml{how's $\lambda$ realted to $t_e$?  the connection is unclear.}
% . The branching factor, is updated for each iteration of parallel propagation and is set by the following equation:
% % 
% \begin{equation}
% \label{eq:BF}
%     \lambda = \min \left\{ \lambda_{\text{max}}, \; \left\lfloor \frac{t_e - |\mathcal{T}|}{|V_E|} \right\rfloor \right\},
% \end{equation}
% % 

Secondly, $t_e$ affects the number of nodes that can be included in $\mathcal{T}$. As the problem difficulty increases, such as with a system's state dimension, a larger $t_e$ is recommended to sufficiently explore the space. \alg's search characteristics make finding a suitable $t_e$ relatively straightforward for a given system, as the tree expands in all accessible free space areas. We demonstrate this in Sec.~\ref{sec:experiments}, where systems with the same state dimension utilize a constant $t_e$ value across all testing environments. In Sec.~\ref{sec:experiments}, we also show that as $t_e$ increases, the success rate of \alg converges to 100\%.

\begin{remark}
    The maximum tree size can be made adaptive by increasing $t_e$ by a constant multiple if a solution is not found after the tree size nears its threshold. However, this introduces vector resizing operations on high latency device memory, slowing down the search.
\end{remark}

\subsection{Decomposition Tuning}

% \np{Should we mention our experiments?}
\rv{The search efficiency of \alg is dependent on the choice of space decomposition. An improper decomposition, e.g., one that is too coarse or too fine, can lead to inefficiencies. A decomposition that is too fine may cause slower updates due to the large number of regions. On the other hand, a decomposition that is too coarse may lead to poor approximation of promising space regions. For instance, if a region frequently generates invalid trajectories but contains critical space for finding a solution, \alg may experience inefficient search. 
A method to mitigate this is to use a free-space-obeying decomposition, as proposed in \cite{plaku2010motion}.
Nonetheless, \alg remains probabilistically complete with any valid decomposition.
}

% \begin{remark}
%     \rv{Free-space-obeying decompositions, e.g., the one proposed in \cite{plaku2010motion}, can be used to mitigate search inefficiencies.}
% \end{remark}

 % For instance, if $\mathcal{R}$ includes a region that frequently generates invalid trajectories but contains critical space for finding a solution, such as a narrow passage with a coarse decomposition, .

% \rv{\alg's space decomposition, $\mathcal{R}$, requires tuning to improve search efficiency. Since \alg favors the exploration of unexplored and safe regions, an improper decomposition—e.g., one that is too coarse—can lead to a poor approximation of promising space regions. For instance, if $\mathcal{R}$ includes a region that frequently generates invalid trajectories but contains critical space for finding a solution, such as a narrow passage with a coarse decomposition, \alg may experience search inefficiencies.}

% \rv{In these scenarios, refining \alg's space decomposition introduces minimal computational overhead while significantly improving search efficiency, as demonstrated in the narrow passage experiment in \ref{sec:experiments}.}

\subsection{Efficient Application to Highly Parallel Devices}

\alg's propagation subroutine implementation is well-suited for parallelism due to three main factors. \rv{First, we utilize \emph{low-latency memory} by distributing commonly used data to groups of nearby threads. Specifically, the state $x \in V_E$ is shared via on-chip memory to all $\lambda$ threads assigned to expand the node.} \rv{Second, we \emph{balance workloads} across threads by having each thread create a single trajectory segment, minimizing thread divergence—i.e., variations in execution paths that force threads into serial execution.} \rv{Third, the acceptance of new samples and their addition to $V_U$ is achieved with \emph{thread independence}, using \eqref{eq:Accept} and unique thread identifiers.} 
% This approach ensures independence and avoids memory write conflicts.

\alg maintains its space decomposition in a parallel-friendly manner through the metrics \eqref{eq:FreeVol}-\eqref{eq:R_iw} that can be calculated independently of other regions and by ensuring that each region's calculations have an equal number of operations. Further, we avoid the need for serial data structures when choosing expansion nodes by adding samples to $V_E$ independently via the acceptance probability \eqref{eq:Accept}.

Furthermore, the organization of samples into three disjoint sets, $V_{U}, V_{O},$ and $V_E$, enables a straightforward and memory-efficient representation. In \alg, we do this via a boolean-mask representation that is thoroughly discussed in \cite{merrill2012scalable}.

\rv{
Lastly, \alg reduces latency between its subroutines by pre-allocating a large memory chunk on the GPU to accommodate $t_e$ nodes. This allows \alg to construct its tree directly on the GPU, avoiding the transfer of large data structures between devices.
}

\section{Analysis}
    \label{sec:Analayis}
    % \np{I have swapped the analysis section and the implementation discussion, as recommended by Reviewer 7. However, the analysis section discusses scalability, which refers to the vector resizing operations explained in the implementation discussion.}

% \rv{Here, we show that \alg is probabilistically complete for Problem~\ref{problem}. We begin by providing a definition of probabilistic completeness for algorithms that solve Problem~\ref{problem}.}

Here, we show that \alg is probabilistically complete for Problem~\ref{problem} and analyze its scalability.

\subsection{Probabilistic Completeness}

We begin with a definition of probabilistic completeness for algorithms that solve Problem \ref{problem}.

% In this section, we show that \alg is probabilistically complete for Problem~\ref{problem}. We begin with a definition of probabilistic completeness for algorithms that solve Problem \ref{problem}.

\begin{definition}[Probabilistic Completeness]
    \label{def: prob complete}
    A sampling-based algorithm is \emph{probabilistically complete} if the probability that the algorithm fails to return a solution, given one exists, approaches zero as the number of samples approaches infinity.    
\end{definition}

We first show that,
in \alg,
every tree node has a non-zero probability of being extended.

\begin{lemma}
    \label{lemma: nonzero prob}
    Let $x \in \mathcal{T}$ be a node in the tree. The probability that $x$ is selected for extension is lower bounded by $\epsilon \in (0,1)$.
\end{lemma}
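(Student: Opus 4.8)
The plan is to reduce the event ``$x$ is selected for extension'' to a single Bernoulli trial governed by the acceptance probability $P_{accept}(\mathcal{R}_i)$ of the region $\mathcal{R}_i$ that contains $x$, and then to show that this probability can never drop below $\epsilon$. First I would locate where selection actually occurs in the algorithm. Since $x \in \mathcal{T}$, it sits in $V_E \cup V_O$, and its region $\mathcal{R}_i$ belongs to $\mathcal{R}_{avail}$ (because $\mathcal{R}_{avail}$ is precisely the set of regions containing a tree node). Inspecting \texttt{UpdateNodeSets} (Alg.~\ref{algo:UpdateNodeSets}), a tree node is placed into the expansion set $V_E$ for the next \texttt{Propagate} call in exactly two ways: if it currently sits in $V_E$ it is \emph{retained} with probability $P_{accept}(\mathcal{R}_i)$ (Lines~1--3, since it is moved to $V_O$ with the complementary probability), and if it currently sits in $V_O$ it is \emph{promoted} to $V_E$ with probability $P_{accept}(\mathcal{R}_i)$ (Lines~7--9). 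In either case the probability that $x$ ends up in $V_E$, and is therefore extended in the following iteration, equals $P_{accept}(\mathcal{R}_i)$. Hence it suffices to prove the pointwise bound $P_{accept}(\mathcal{R}_i) \geq \epsilon$.

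Second, I would establish positivity of the score quantities so that the normalized term in \eqref{eq:Accept} is well defined and non-negative. From \eqref{eq:FreeVol}, since $\delta > 0$ and $vol(\mathcal{R}_i) > 0$ while $n_{valid}, n_{invalid} \geq 0$, the numerator $(\delta + n_{valid}(\mathcal{R}_i)) \cdot vol(\mathcal{R}_i)$ and denominator are both strictly positive, so $FreeVol(\mathcal{R}_i) > 0$. Feeding this into \eqref{eq:R_iw}, whose denominator is strictly positive because $Cov(\mathcal{R}_i) \geq 0$ and $1 + (n_{valid}(\mathcal{R}_i)+n_{invalid}(\mathcal{R}_i))^2 \geq 1$, yields $Score(\mathcal{R}_i) > 0$ for every available region. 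Consequently the normalizing sum $\sum_{\mathcal{R}_j \in \mathcal{R}_{avail}} Score(\mathcal{R}_j)$ is strictly positive (the region of $x$ alone contributes a positive term), so the ratio $Score(\mathcal{R}_i)/\sum_{\mathcal{R}_j \in \mathcal{R}_{avail}} Score(\mathcal{R}_j)$ lies in $[0,1]$ and in particular is non-negative.

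Third, I would conclude the bound. Because the ratio is non-negative, its sum with $\epsilon$ is at least $\epsilon$, and since $\epsilon \in (0,1)$ the outer $\min\{1,\cdot\}$ in \eqref{eq:Accept} cannot pull the value below $\epsilon$; that is, $P_{accept}(\mathcal{R}_i) \geq \min\{1,\epsilon\} = \epsilon$. Combined with the first step, $x$ is selected for extension with probability at least $\epsilon$.

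I do not expect a serious obstacle, as the argument is a direct reading of the transition rules in Alg.~\ref{algo:UpdateNodeSets} together with the algebra of \eqref{eq:Accept}. The only point requiring genuine care is confirming that every factor entering $Score(\mathcal{R}_i)$ is strictly positive, so that dividing by the normalizing sum is legitimate and the $+\epsilon$ offset is genuinely additive; once positivity is in hand, the $\min$ with $1$ is harmless precisely because $\epsilon < 1$. A secondary sanity check worth recording is the freshly-added case: a node just moved from $V_U$ to $V_E$ (Line~5) is placed in $V_E$ deterministically, so the bound holds trivially there as well.
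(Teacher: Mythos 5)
Your proposal is correct, and it reaches the bound by a cleaner, more direct route than the paper. The paper's proof proceeds by a case analysis over region types (entirely invalid, entirely free, partially free), and within each case argues via a worst-case limit: as the number of nodes in $\mathcal{R}_i$ grows, $Score(\mathcal{R}_i)$ is driven to zero by \eqref{eq:R_iw} while other regions' scores may dominate, so \eqref{eq:Accept} ``approaches $\epsilon$'' but never falls below it. You instead observe that since $x \in \mathcal{T}$, its region is necessarily in $\mathcal{R}_{avail}$ (which makes the paper's Case~1 and the not-in-$\mathcal{R}_{avail}$ sub-cases vacuous for the lemma as stated), and then prove the inequality pointwise: strict positivity of \eqref{eq:FreeVol} and hence of \eqref{eq:R_iw} makes the normalized score a well-defined non-negative quantity, so $P_{accept}(\mathcal{R}_i) = \min\{1, \text{ratio} + \epsilon\} \geq \min\{1,\epsilon\} = \epsilon$ with no limiting argument needed. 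Your approach buys two things: it establishes the bound deterministically at every iteration rather than as an infimum over worst-case scenarios, and your first step makes explicit the reduction that the paper compresses into one sentence --- namely that ``selected for extension'' is exactly a Bernoulli trial with parameter $P_{accept}(\mathcal{R}_i)$, whether the node is retained in $V_E$ (Lines 1--3 of Alg.~\ref{algo:UpdateNodeSets}), promoted from $V_O$ (Lines 7--9), or freshly admitted from $V_U$ (probability one). What the paper's case analysis buys in exchange is a slightly broader statement: it verifies that \emph{every} region, including those that cannot yet or can never hold a tree node, carries an acceptance probability of at least $\epsilon$, which is the form implicitly relied upon when new samples are screened in Alg.~\ref{algo: Propagate}.
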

\begin{proof}
    The probability of extending a node $x \in \mathcal{T}$ in \alg is calculated using \eqref{eq:Accept} for its corresponding region $\mathcal{R}_i \in \mathcal{R}$. We demonstrate that every region $\mathcal{R}_i$ falls into one of three cases, and in each case, \eqref{eq:Accept} is lower bounded by $\epsilon$. 

    \emph{Case 1}: a region $\mathcal{R}_i$ is entirely in invalid space, i.e., $\mathcal{R}_i \cap X_{free} = \emptyset$, meaning $\mathcal{R}_i$ cannot contain a tree node and $\mathcal{R}_i \notin \mathcal{R}_{avail}$. Thus, \eqref{eq:Accept} is set to $1 > \epsilon$, as per Line 3 of Alg.~\ref{algo: alg}.

    \emph{Case 2}: a region $\mathcal{R}_i$ is entirely within free space, i.e., $\mathcal{R}_i \subseteq X_{\free}$. If $\mathcal{T}$ does not contain a node in $\mathcal{R}_i$, then $\mathcal{R}_i \notin \mathcal{R}_{avail}$, and \eqref{eq:Accept} is set to $1 > \epsilon$, as by Line 3 of Alg.~\ref{algo: alg}. However, if $\mathcal{T}$ does contain a node in $\mathcal{R}_i$, we examine the worst-case scenario that produces the lowest probability from \eqref{eq:Accept}. In this scenario, the number of nodes in $\mathcal{R}_i$ approaches infinity, driving the score of $\mathcal{R}_i$ to $0$ by (\ref{eq:R_iw}). Additionally, in a worst-case scenario, the score of all other available regions approaches infinity. In this case, \eqref{eq:Accept} trivially approaches $\epsilon$.

    \emph{Case 3}: $\mathcal{R}_i$ is partially in $X_{\free}$, i.e., $\mathcal{R}_i \cap X_\free \neq \emptyset$ and $\mathcal{R}_i \cap X_\free \neq \mathcal{R}_i$. In this case, similar to above, if $\mathcal{R}_i \notin \mathcal{R}_{avail}$, \eqref{eq:Accept} is set to $1$ by Line 3 of Alg.~\ref{algo: alg}. If $\mathcal{R}_i \in \mathcal{R}_{avail}$, as in \textit{Case 2}, \eqref{eq:Accept} approaches $\epsilon$ in the worst-case, when the number of nodes in $\mathcal{R}_i$ approaches infinity.
\end{proof}

Finally, we can state our main analysis result.

\begin{theorem}
    \alg is probabilistically complete.
\end{theorem}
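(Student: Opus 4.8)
The plan is to assume that Problem~\ref{problem} admits a valid solution and to show that the probability \alg fails to recover one vanishes as the number of iterations grows, matching Definition~\ref{def: prob complete}. Concretely, I would assume there exists a valid trajectory $\mathbf{x}^\star:[0,t_f]\to X_{\free}$ with $\mathbf{x}^\star(0)=x_{\init}$ and $\mathbf{x}^\star(t_f)\in X_{\goal}$ that has positive clearance, i.e., a tube of radius $\rho>0$ about $\mathbf{x}^\star$ remains in $X_{\free}$ (the usual robustness hypothesis for kinodynamic completeness). First I would discretize $[0,t_f]$ into $m$ consecutive intervals of duration at most $T_{prop}$, producing waypoints $x^\star_0=x_{\init},x^\star_1,\dots,x^\star_m\in X_{\goal}$, reference controls $u_j$ and durations $dt_j\in(0,T_{prop}]$ that steer $x^\star_{j-1}$ to $x^\star_j$, and balls $B_j$ of radius $r_j\le\rho$ centered at each waypoint.

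Next I would establish a per-step propagation bound using the Lipschitz assumption on $f$. By a Gronwall-type estimate, two trajectories starting within $B_{j-1}$ and driven by controls within an $\eta$-neighborhood of $u_j$ over a duration close to $dt_j$ stay uniformly close; choosing $r_j$, $\eta$, and the duration window small enough guarantees the endpoint lands in $B_j$ and the connecting segment remains inside the clearance tube, hence in $X_{\free}$. Since \alg draws $u$ and $dt$ from distributions with positive density over $U$ and $(0,T_{prop}]$, the set of such ``good'' pairs $(u,dt)$ has positive measure, so one expansion produces a node in $B_j$ with probability $\beta_j>0$; with $\lambda\ge1$ independent samples per expansion this only improves.

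I would then chain these steps using Lemma~\ref{lemma: nonzero prob}. That lemma guarantees any tree node is flagged for expansion with probability at least $\epsilon$, and the same lower bound applies to \emph{retaining} a freshly generated valid node, since $P_{accept}(\mathcal{R}_i)\ge\epsilon$ in \eqref{eq:Accept}. Conditioned on a node residing in $B_{j-1}$, the probability of creating and keeping a node in $B_j$ within one iteration is therefore bounded below by a positive constant; inducting over $j=1,\dots,m$ shows that a favorable run of $m$ iterations advances the tree from $x_{\init}$ to $B_m\subseteq X_{\goal}$ with probability at least some $p>0$ independent of the iteration index. Because the number of such attempts grows without bound, the probability of never reaching the goal is at most $(1-p)^k\to0$, which establishes the result.

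The main obstacle I expect is the rigorous per-step argument in the presence of \alg's region-based bookkeeping. Unlike plain forward-propagation RRT, \alg may move the node we need out of the expansion set $V_E$ into $V_{O}$ and only return it with probability $P_{accept}$, and it admits a newly generated node to $V_{U}$ only if its sub-region is unvisited or, otherwise, with probability $P_{accept}$. I would have to verify that these mechanisms never drive the relevant selection and retention probabilities to zero, which again reduces to the uniform bound $P_{accept}\ge\epsilon$ supplied by Lemma~\ref{lemma: nonzero prob}. A secondary subtlety is that successive attempts are not independent, since they share the evolving tree; the standard remedy is to lower-bound each attempt's \emph{conditional} success probability by $p$ and conclude via a Borel--Cantelli-type argument rather than assuming independence.
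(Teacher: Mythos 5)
Your proposal is correct and takes essentially the same route as the paper: the paper's proof sketch cites \cite[Lemmas 2--3]{kleinbort2018probabilistic} for precisely the content you reconstruct by hand (clearance tube around a robust solution, discretization into segments of duration at most $T_{prop}$, Gronwall/Lipschitz closeness bounds, positive measure of good control--duration pairs), and, like you, it replaces kinodynamic RRT's nearest-neighbor selection argument with the uniform lower bound $\epsilon$ from Lemma~\ref{lemma: nonzero prob} on both node selection and sample retention, then follows the structure of \cite[Theorem 2]{kleinbort2018probabilistic} to conclude. The only difference is one of exposition: you unpack the cited lemmas and the conditional-probability chaining (non-independence of attempts) explicitly, whereas the paper defers these details to the citation.
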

\begin{proof}[Proof Sketch]
    % Our proof adapts the result of \cite{kleinbort2018probabilistic}, proven for the case of kinodynamic RRT, to \alg. Specifically, the conditions for \cite[Lemma 2–3]{kleinbort2018probabilistic} also hold for Problem~\ref{problem} in this paper. Additionally, Lemma~\ref{lemma: nonzero prob} provides a lower bound $\epsilon > 0$ on the probability of extending from an arbitrary tree node. 
    % Using \cite[Lemma 2-3]{kleinbort2018probabilistic} and Lemma~\ref{lemma: nonzero prob}, we can use the same logical structure of the proof of \cite[Theorem 2]{kleinbort2018probabilistic}. Therefore, we have that, as number of samples approach infinity, \alg asymptotically, almost-surely finds a trajectory from initial state $x_\init$ to a goal region $X_{\goal}$ that is a solution to Problem~\ref{problem}.
    Our proof is an adaptation of the result from \cite{kleinbort2018probabilistic}, originally established for kinodynamic RRT, for \alg. Specifically, the conditions laid out in \cite[Lemma 2–3]{kleinbort2018probabilistic} hold under the assumptions of Problem~\ref{problem} in this paper. Furthermore, Lemma~\ref{lemma: nonzero prob} provides a lower bound $\epsilon > 0$ on the probability of extending from an arbitrary tree node. 
    By combining \cite[Lemma 2-3]{kleinbort2018probabilistic} with Lemma~\ref{lemma: nonzero prob}, we can follow the same logical structure as the proof of \cite[Theorem 2]{kleinbort2018probabilistic}. Thus, as the number of samples approaches infinity, \alg asymptotically almost-surely finds a valid trajectory from $x_\init$ to $X_{\goal}$ that solves Problem~\ref{problem}.
\end{proof}

\subsection{Scalability}

Here, we discuss the scalability of \alg.
Firstly, \alg's scalability increases as the number of cores in the parallel device increases. Since \alg supports adaptive tuning of the branching factor $\lambda$ through varying $t_e$, an increased number of cores can be leveraged by increasing $t_e$ and setting a higher $\lambda_{max}$. This results in computation time improvements as the number of cores increase, allowing \alg to scale as parallel hardware computation power improves.

Secondly, as the problem becomes more complex, i.e., requiring a larger tree (more nodes) to find a solution, traditional tree-based SBMPs suffer.  That is, they slow down significantly as the number of nodes increases due to the sequential nature of those algorithms. 
% , which struggles with handling the growing number of nodes in the tree structure. 
However, \alg does not struggle with increasing number of nodes, unless the tree size nears its threshold and a resizing operation is needed.
% when the tree reaches its maximum size.
% 
% \qh{Is this true? If $t_e$ is huge, it may cause $\lambda$ to be too large for the parallel computation? Maybe something about as long as $\lambda$ is low enough?} \np{yes, $\lambda$ is limited by user-defined parameter $\lambda_{max}$}.
% 
This property makes \alg particularly more advantageous for planning for systems with large dimensional state spaces
% . That is, existing kinodynamic SBMP suffer in such problems since 
% large trees are often required to sufficiently search the space.  For \alg, a large tree size does not cause a slowdown. 
% \ml{Growing tree size does however affect its performance negatively if $t_e$, and thus maximum tree size, must be modified on the fly.}
since they often require large trees to sufficiently search the space.

\section{Experiments}
    \label{sec:experiments}
    We demonstrate the performance of \alg in planning for various dynamical systems across three 3D environments shown in Fig.~\ref{fig:environments}. 
% We apply \alg to two 6-dimensional state space systems: 
The considered systems are:
(i) 6D double integrator, (ii) 6D Dubins airplane~\cite{chitsaz2007time}, and (iii) 12D highly nonlinear quadcopter~\cite{etkin1995dynamics}.  

\begin{figure}[b]
    \centering
     \begin{subfigure}[b]{0.32\columnwidth}
        \centering
        \includegraphics[width=\textwidth]{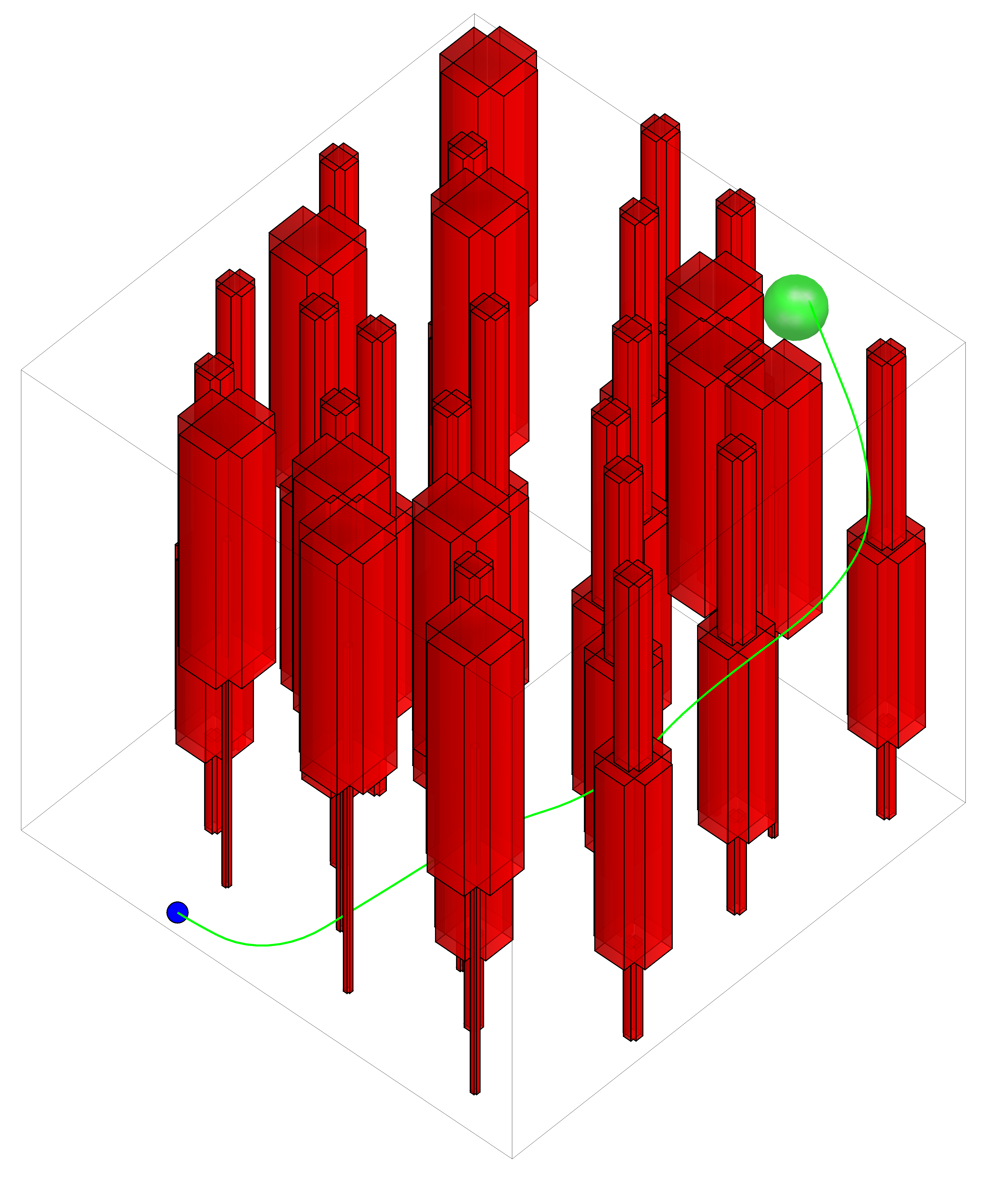}
        \caption{Forest}
        \label{fig:trees}
    \end{subfigure}
    % \hfill
    \begin{subfigure}[b]{0.30\columnwidth}
        \centering
        \includegraphics[width=\textwidth]{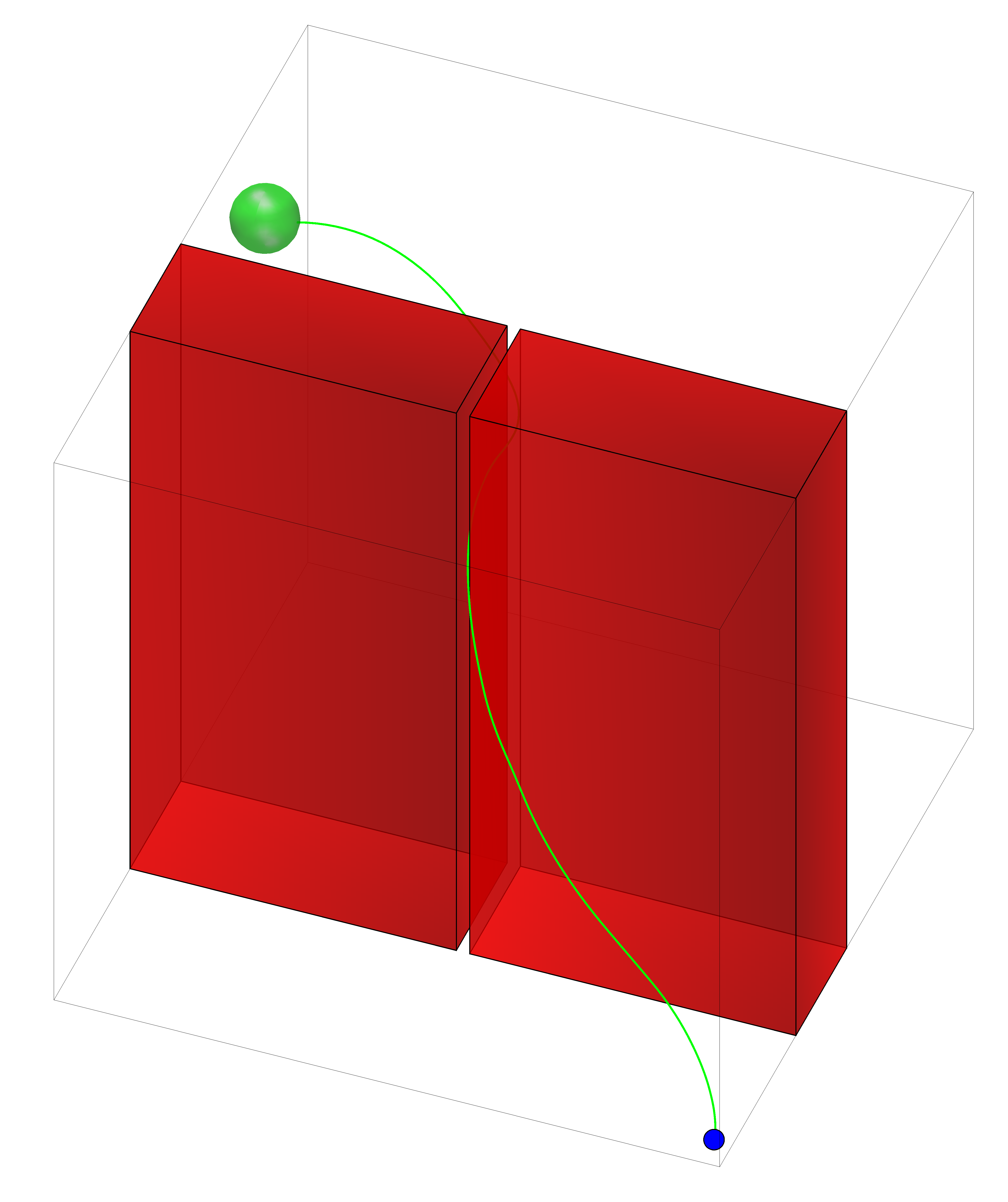}
        \caption{Narrow Passage}
        \label{fig:narrowPassage}
    \end{subfigure}
     % \hfill
    \begin{subfigure}[b]{0.34\columnwidth}
        \centering
        \includegraphics[width=\textwidth]{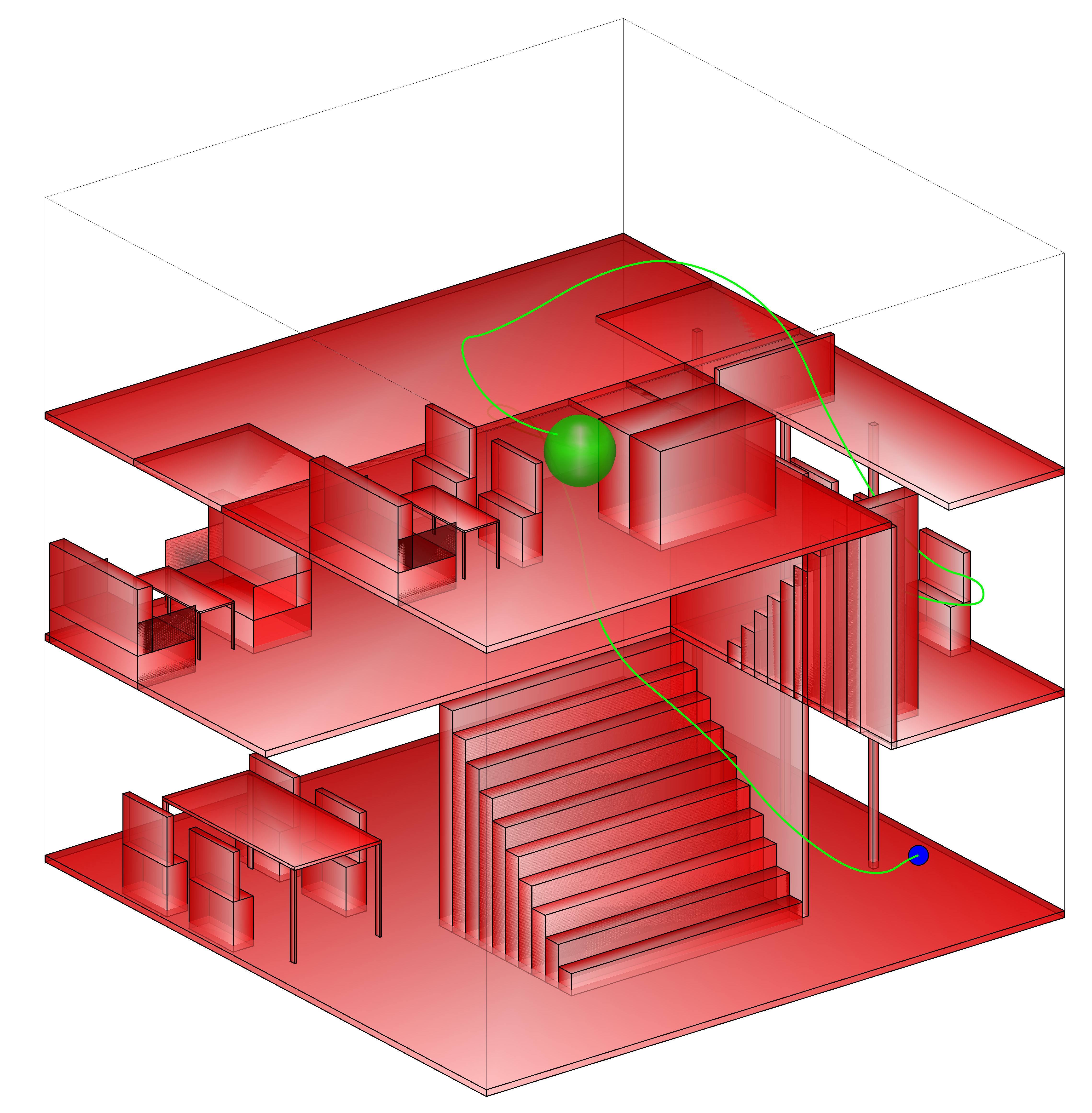}
        \caption{Building}
        \label{fig:house}
    \end{subfigure}
    \caption{Environments used throughout the experiments with the solution trajectory produced by \alg. Initial position and goal region are shown by blue and green spheres, respectively.
    % (a) A forest environment with many trees of varying sizes, taken from \cite{ichter2017group}. (b) A long, narrow passage. (c) A multi-story building \cite{ichter2017group}.
    Environments (a) and (c) are taken from \cite{ichter2017group}.  
    }
    \label{fig:environments}
\end{figure}

\begin{table*}[t]
    \centering
    % \caption{Benchmark results
    % % . All planners were given a maximum runtime of \maxT seconds, and each metric represents the mean of \itr executions. 
    % over 50 trials with maximum planning time of \maxT seconds.
    % % The planners employed identical propagation and collision-checking logic. Additionally, planners utilizing a decomposition method applied the same decomposition strategy.
    % }
    \caption{\rv{Benchmark results over 50 trials with a 60-second maximum planning time. CPU-based algorithms used coarse-grained parallelization, growing multiple trees in parallel and utilizing all available cores, denoted by ``Par'' before the algorithm name.} 
    }
    \label{table:results}
    \resizebox{\textwidth}{!}{
    \begin{tabular}{l l |  rrr | rrr  | rrr}
        % \hline
        \toprule
        \multirow{2}{*}{Algorithm} & \multirow{2}{*}{Device} & \multicolumn{3}{c}{\underline{\hspace{12mm}Environment \subref{fig:trees} \hspace{12mm}}} & \multicolumn{3}{c}{\underline{\hspace{12mm}Environment \subref{fig:narrowPassage}\hspace{12mm}}} & \multicolumn{3}{c}{\underline{\hspace{12mm}Environment \subref{fig:house}\hspace{12mm}}} \\ 
        % \cline{3-11} 
                                   &                         & Time (ms) & $t_{alg}/t_{\alg}$ & Succ \% & Time (ms) & $t_{alg}/t_{\alg}$ & Succ \% & Time (ms) & $t_{alg}/t_{\alg}$ & Succ \% \\ 
        \hline
        % \midrule
        \multicolumn{11}{c}{6D Double Integrator} \\ 
        \hline
        % \midrule
        \rv{Par} RRT                       & CPU     & 157.0 & 42.0 & 100.0 & 598.1  & 184.1 & 100.0 & 993.7  & 183.4 & 100.0 \\
        \rv{Par} EST           & CPU     & 416.1 & 111.4 & 100.0 & 1822.5 & 561 & 100.0 & 5122.8 & 945.5 & 100.0 \\
        \rv{Par} PDST              & CPU     & 486.0 & 130.1 & 100.0 & 1144.7  & 352.4 & 100.0 & 2094.5 & 386.6 & 100.0 \\
        \rv{Par} SyCLoP             & CPU     & 216.6 & 58.0 & 100.0 & 168.1  & 51.8 & 100.0 & 1183.3 & 218.4 & 100.0 \\
        \alg & Embd. GPU     & 62.3  & 16.7 & 100.0 & 29.1  & 9.0 & 100.0 & 78.6  & 14.5 & 100.0 \\ 
        \alg & GPU     & \textbf{3.7} & 1.0 & 100.0 & \textbf{3.3} & 1.0 & 100.0 & \textbf{5.4} & 1.0 & 100.0 \\ 
        \hline
        % \midrule
        \multicolumn{11}{ c }{Dubins Airplane} \\ 
        \hline
        % \midrule
        \rv{Par} RRT                       & CPU     & 632.5  & 165.0 & 100.0 & 4973.9  & 1389.9 & 100.0 & 22698.7  & 3207.4 & 98.0 \\
        \rv{Par} EST           & CPU     & 275.7 & 71.9 & 100.0 & 3443.6 & 962.3 & 100.0 & 9108.5 & 1287.1 & 100.0 \\
        \rv{Par} PDST              & CPU     & 515.6 & 134.5 & 100.0 & 12410.3   & 3467.9 & 98.0 & 18570.2 & 2624.0 & 100.0 \\
        \rv{Par} SyCLoP              & CPU     & 234.4 & 61.2 & 100.0 & 936.2  & 261.6 & 100.0 & 25544.7 & 3610.0 & 100.0 \\
        \alg & Embd. GPU     & 67.5  & 17.6 & 100.0 & 43.1 & 12.0 & 100.0 & 110.5 & 15.6 & 100.0 \\ 
        \alg & GPU     & \textbf{3.8} & 1.0 & 100.0 & \textbf{3.6} & 1.0 & 100.0 & \textbf{7.1} & 1.0 & 100.0 \\ 
        \hline
        % \midrule
        \multicolumn{11}{ c }{12D Non Linear Quadcopter} \\ 
        \hline
        % \midrule
        \rv{Par} RRT                       & CPU     & 40694.7  & 2262.5 & 72.0 & 91023.1 &  5306.9 & 12.0 & 93144.7  &  3873.2 & 8.0 \\
        \rv{Par} EST           & CPU     & 10034.4 & 557.9 & 100.0 & 35435.9 & 2066 & 90.0 & 41381.0 & 1720.8 & 84.0 \\
        \rv{Par} PDST              & CPU     & 23726.9 & 1319.1 & 92.0 & 51203.4   & 2985.3 & 70.0 & 53169.8 & 2211 & 68.0 \\
        \rv{Par} SyCLoP              & CPU     & 3384.2 & 188.2 & 100.0 & 10181.6   & 593.6 & 100.0 & 86558.3 & 3599.4 & 16.0 \\
        \alg & Embd. GPU     & 797.0 & 44.3 & 100.0 & 681.8 & 39.8 & 100.0 & 935.7 & 38.9 & 100.0 \\ 
        \alg & GPU     & \textbf{18.0} & 1.0 & 100.0 & \textbf{17.2} & 1.0 & 100.0 & \textbf{24.1} & 1.0 & 100.0 \\ 
        % \hline 
        \bottomrule
    \end{tabular}
    }
\end{table*}

For each pair of dynamical system and environment, we benchmark the performance and scalability of \alg against four traditional SBMPs: RRT~\cite{lavalle2001randomized}, EST~\cite{hsu1997path}, PDST~\cite{ladd2005fast} and SyCLoP~\cite{plaku2010motion}. 
To ensure fairness, for these comparison planners, we used a coarse-grained CPU parallelization method where multiple trees grow in parallel as suggested by~\cite{746692}.
Each tree is managed by a separate thread, and the planner returns the first solution found.

We implemented \alg in CUDA C and performed benchmarks on two GPUs with different capabilities. We used an NVIDIA RTX 4090 as a baseline, which has 16,384 CUDA cores and 24 GB of RAM. Further, to test the efficiency of \alg on an embedded GPU, we ran benchmarks on an NVIDIA Jetson Orin Nano, which has 1,024 cores and 8 GB of RAM. The comparison algorithms, are implemented in C++ using \textit{OMPL} \cite{sucan2012the-open-motion-planning-library} and executed on an Intel Core i9-14900K CPU with 24 cores, a base clock speed of 4.4 GHz, and 128 GB of RAM.
Our implementation of \alg is publicly available: \url{https://github.com/aria-systems-group/Kino-PAX}.

% We followed the standard setup configurations for all comparison algorithms as recommended by the \textit{OMPL} documentation. All algorithms, including \alg, were configured to use the same methods for state propagation, state validity checking, and space decomposition. For all experiments, a grid-based decomposition is used, with the dimensionality of the grid equal to the system's state-space dimension. For \alg's hyperparameter a $t_e$ value of $2 \times 10^5$ is used for all 6-dimensional dynamical systems, and for the 12-dimensional system, $t_e$ is set to $4 \times 10^5$. For each combination of algorithm, dynamic model, and workspace, we performed \itr queries, each with a maximum runtime of \maxT seconds.
We followed the standard setup configurations for all comparison algorithms as recommended by the \textit{OMPL} documentation. All algorithms, including \alg, were configured to use the same methods for state propagation, state validity checking and space decomposition. For all experiments, a grid-based decomposition was used with the dimensionality of the grid equal to the system's state-space dimension. For \alg's hyperparameters, $\lambda_{max}$ was set to $32$ and \(t_e\) was set to \(2 \times 10^5\) for all 6D systems and \(4 \times 10^5\) for the 12D system. For each combination of algorithm, dynamic model, and workspace, we performed \itr queries, each with a maximum runtime of \maxT seconds.

\subsection{Benchmark Results}

% \np{Mean runtime for Dubins Airplane on Environment 2, with workspace decomposition is 9.19ms}

Table \ref{table:results} shows the mean runtime, the speed ratio relative to the desktop GPU implementation of \alg, and the success rate within the allotted planning time for each combination of algorithm, environment, and dynamics.

For both 6D systems, \alg finds a solution trajectory in less than $8$ ms across all testing environments. For the 6D Double Integrator, \alg is on average $85\times$, $287\times$, $433 \times$ faster in Environments  \subref{fig:trees}, \subref{fig:narrowPassage}, \subref{fig:house}, respectively, 
% and  faster in Environment 1, $287\times$ faster in Environment 2, and 433 times faster in Environment 3 
compared to the baseline algorithms. 
For the Dubins Airplane system, the performance gap of \alg widens further. For instance, in Environment \subref{fig:house}, \alg experiences a slowdown of less than $1.3 \times$ ($\sim$2 ms) compared to RRT's $22 \times$ ($\sim$20,000 ms), EST's $1.8 \times$ ($\sim$4,000 ms), PDST's $8.9 \times$ ($\sim$16,000 ms), and SyCLoP's $22 \times$ ($\sim$24,000 ms). Additionally, the embedded GPU implementation of \alg outperforms all serial baseline methods, finding valid trajectories for all 6D problems in under $115$ ms. When dealing with the more challenging 12D nonlinear quadcopter problem, \alg finds solutions in less than $25$ ms across all environments. On average, this marks an improvement of \emph{three orders of magnitude} over all reference serial solutions. In the most challenging environment (Environment \subref{fig:house}), the best-performing baseline algorithm (EST) is $1720 \times$ slower than the desktop implementation of \alg and $44 \times$ slower than the embedded GPU implementation.

As evident from the results, \alg outperforms baseline algorithms more significantly as the problem becomes more challenging; in other words, the performance gap significantly widens in favor of \alg. This is due to two main factors. First, as the dimensionality of the search space increases, exponentially more trajectory segments are required to find a valid solution. This suits \alg particularly well, as it is designed to propagate a massive number of nodes efficiently in parallel. Second, as the problem difficulty increases, the efficiency of \alg becomes more prominent. This is because unlike traditional tree-based SBMPs that slow down as the number of samples increases, \alg does not suffer as much with the size of the tree.

\subsection{Effects Of Tuning Parameter $t_e$}

To support the point made in Sec.~\ref{sec:GPU_implementation} that \alg's hyperparameter $t_e$ is easy to tune and that \alg remains efficient across a wide range of values, we present a numerical experiment showing that as \alg is provided with a sufficiently large $t_e$, its failure rate converges to zero. We also examine the impact on runtime as $t_e$ increases. 
% Figures \ref{fig:NumFailuresVsMemorySize} and \ref{fig:RuntimeVsMemorySize} 
Fig.~\ref{fig:iteration_grid}
presents the results for planning with the 12D nonlinear quadcopter system in Environment \subref{fig:trees}.

% To complement the proof in Section \ref{sec:Analayis} that \alg is probabilistically complete, we present a numerical experiment demonstrating that as \alg is provided with a sufficiently large expected tree size, \(t_e\), its failure rate converges to zero. Additionally, we examine the effects on runtime as \(t_e\) increases. Figures \ref{fig:NumFailuresVsMemorySize} and \ref{fig:RuntimeVsMemorySize} illustrate the results for planning with the 12D Nonlinear Quadcopter system in Environment 1. 

As shown in Fig.~\ref{fig:NumFailuresVsMemorySize}, for small values of \(t_e\), \alg is unable to find solutions, indicating that the number of samples required exceeds \(t_e\). As \(t_e\) increases beyond $2.8 \times 10^5$, \alg achieves a 100\% success rate, demonstrating that \(t_e\) is not a sensitive tuning parameter with respect to finding solutions.

% As shown in Figure \ref{fig:NumFailuresVsMemorySize}, for small values of \(t_e\), \alg is unable to find a solution, indicating that the number of samples required exceeds \(t_e\). As \(t_e\) increases, \alg achieves a 100\% success rate with values of $2.8 \times 10^5$ or larger, indicating probabilistic completeness.\qh{I don't know if we want to stress the probabilistic completeness aspect here.. maybe we want to say that the performance/efficiency of the algorithm is dependent on $t_e$, but also that it is not a very sensitive hyperparameter because you get high success rate after this value} 

% As depicted in Fig. \ref{fig:RuntimeVsMemorySize}, as \(t_e\) increases, so does the runtime. This increase in runtime is due to several factors. First, when using a larger \(t_e\), \alg on average uses a larger branching factor \(\lambda\), which results in the tree containing more nodes. This trend is shown in Table \ref{table:t_e}, where the number of nodes in the tree increases by approximately \(1 \times 10^5\) for every \(2 \times 10^5\) increase in \(t_e\). Second, since a larger \(t_e\) requires more memory, the implementation of \alg experiences more frequent cache misses, leading to data being fetched from slower global memory more often.

Fig. \ref{fig:RuntimeVsMemorySize} shows an increase in the hyperparameter \(t_e\) also increases the runtime. This can be attributed to two main factors. First, as \(t_e\) increases, \alg typically uses a larger branching factor \(\lambda\), resulting in a tree with more nodes. In this experiment, the number of nodes in the tree increases by approximately \(1 \times 10^5\) for every \(2 \times 10^5\) increase in \(t_e\). Specifically, we observed $3.1 \times 10^5$ nodes when \(t_e = 4 \times 10^5\) and $6.1 \times 10^5$ nodes when \(t_e = 10 \times 10^5\). Second, the larger \(t_e\) demands more memory, leading to more frequent cache misses in the implementation of \alg, which causes data to be fetched from slower global memory more often.

% \np{should I just remove the following paragraph?}
% Finally, Figure~\ref{fig:RuntimeVsMemorySize} demonstrates that, to achieve optimal performance, \alg needs to be tuned for a specific dynamic system. This tuning process is relatively straightforward, as it primarily depends on the dimensionality of the problem. To illustrate that \alg is robust to changes in $t_e$ with a sufficiently large expected tree size, a $t_e$ value of $2 \times 10^5$ is used for all 6-dimensional dynamical systems, and for the 12-dimensional system, $t_e$ is set to $4 \times 10^5$.\qh{this paragraph feels out of place... it is at the end but the results from table 1 is at the start of the experiment section}

% In summary, this numerical experiment shows that \alg is probabilistically complete. 

\begin{figure}[t!]
    \centering
    \begin{subfigure}[b]{0.51\columnwidth}
        \centering
        \includegraphics[width=\textwidth]{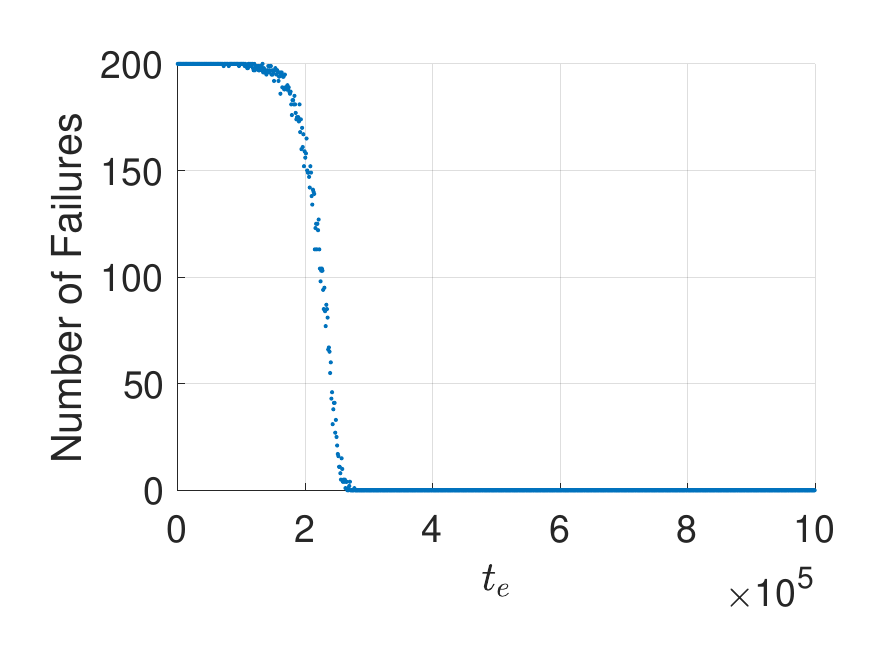}
        \caption{}
        \label{fig:NumFailuresVsMemorySize}
    \end{subfigure}
    % \hfill
    \begin{subfigure}[b]{0.47\columnwidth}
        \centering
        \includegraphics[width=\textwidth]{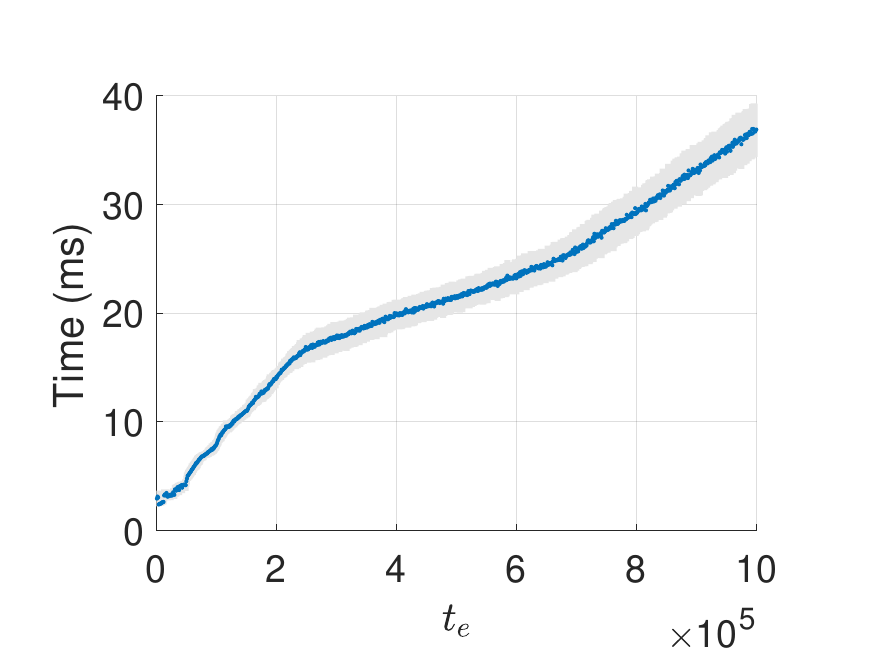}
        \caption{}
        \label{fig:RuntimeVsMemorySize}
    \end{subfigure}
    % \hfill
   \caption{Results of varying the expected tree size $t_e$ for the 12D nonlinear quadcopter system in environment \ref{fig:trees}. (a) Number of Failures vs. $t_e$. (b) Mean runtime and variance of \alg vs. $t_e$.}
    \label{fig:iteration_grid}
\end{figure}

\section{Conclusion}
    \label{sec:conclusion}
    % \np{Introduced and analyzed novel approach to problem \ref{problem}, parallelizing large portions of previously an entirely sequential process, better suiting modern high throughput devices.}

% \np{Allows for potential of planning in real-world changing environments while considering dynamical feasibility for general dynamic systems. }

% \np{Simulated results show that planning times are less than 10 milliseconds for 6-dimensional systems (6DDI and Dubins airplane models). For more complex nonlinear 12-dimensional systems, the planning times are less than 25 milliseconds. These results demonstrate the robustness of \alg in efficiently finding solutions across problems of varying difficulty.}

% \qh{Something about an algorithm that changes $t_e$}

We have introduced a novel motion planning algorithm for kinodynamic systems that enables a significant parallelization of a process previously considered inherently sequential. Our algorithm is well suited to exploit the recent advancements of modern computing devices and is equipped to scale as hardware continues to improve. Benchmark results show planning times of less than $8$ ms for 6-dimensional systems and less than $25$ ms for a 12-dimensional nonlinear system, representing an improvement of up to three orders of magnitude compared to traditional motion planning algorithms.
% For future work, we plan to extend \alg in two directions. First, we aim to make the hyperparameter $t_e$ dynamic, allowing the number of nodes in $\mathcal{T}$ to be resized while maintaining runtime performance. Second, we plan to extend \alg to near-optimal planning, exploring potential speedups for even more challenging problem settings.
% 
For future work, we plan to make the hyperparameter $t_e$ dynamic (adaptive) and extend \alg to a near-optimal planner.

\bibliographystyle{IEEEtran}
\bibliography{refs,references}

\end{document}